\documentclass[10pt, conference, letterpaper]{IEEEtran}

\usepackage{cite}
\usepackage{amsmath,amssymb,amsfonts}
\usepackage{algorithmic}
\usepackage{graphicx}
\usepackage{textcomp}
\usepackage{xcolor}
\usepackage{amsthm}
\usepackage{subfigure}
\usepackage{graphicx}

\setlength\arraycolsep{1.5pt}
\def\BibTeX{{\rm B\kern-.05em{\sc i\kern-.025em b}\kern-.08em
    T\kern-.1667em\lower.7ex\hbox{E}\kern-.125emX}}
 
\newtheorem{myDef}{Definition}
\newtheorem{myConclusion}{Theorem}
\newtheorem{myPosit}{Proposition}

\newtheorem{CheConclusion}{Che's Theorem}

\begin{document}

\title{FMore: An Incentive Scheme of Multi-dimensional Auction for Federated Learning in MEC}
\author{\IEEEauthorblockN{Rongfei Zeng\textsuperscript{\dag}, Shixun Zhang\textsuperscript{\dag}, Jiaqi Wang\textsuperscript{\dag}, and Xiaowen Chu\textsuperscript{\ddag}}
\textsuperscript{\dag}College of Software,  Northeastern University\\
\textsuperscript{\ddag}Department of Computer Science, Hong Kong Baptist University\\
Email: \{zengrf, shixunzhang, jiaqiwang\}@swc.neu.edu.cn, chxw@comp.hkbu.edu.hk
}

\maketitle

\begin{abstract}
Promising federated learning coupled with Mobile Edge Computing (MEC) is considered as one of the most promising solutions to the AI-driven service provision. Plenty of studies focus on federated learning from the performance and security aspects, but they neglect the incentive mechanism. In MEC, edge nodes would not like to voluntarily participate in learning, and they differ in the provision of multi-dimensional resources, both of which might deteriorate the performance of federated learning. Also, lightweight schemes appeal to edge nodes in MEC. These features require the incentive mechanism to be well designed for MEC. In this paper, we present an incentive mechanism FMore with multi-dimensional procurement auction of $K$ winners. Our proposal FMore not only is lightweight and incentive compatible, but also encourages more high-quality edge nodes with low cost to participate in learning and eventually improve the performance of federated learning. We also present theoretical results of Nash equilibrium strategy to edge nodes and employ the expected utility theory to provide guidance to the aggregator. Both extensive simulations and real-world experiments demonstrate that the proposed scheme can effectively reduce the training rounds and drastically improve the model accuracy for challenging AI tasks.
\end{abstract}

\begin{IEEEkeywords}
Mobile edge computing, multi-dimensional auction, federated learning, incentive mechanism
\end{IEEEkeywords}

%\vspace{-0.1cm}
\section{Introduction}
Mobile Edge Computing (MEC) \cite{IotJ:Abbas},\cite{Infocom19:Poularakis}, considered as a promising architecture for future networks, enables edge nodes to locally collect and process various data with the remote cloud coordination, which especially appeals to the Internet of Things (IoT), social networking, 5G, etc. In these scenarios, huge amounts of data are generated and further employed by machine learning to provide AI-driven services, such as classification, recommendation, and prediction. However, the proliferation of data will gradually phase-out the traditional paradigm of centrally processing all the data at a remote cloud. Fortunately, edge nodes equipped with powerful computing capability, sufficient Flash storage, etc., accelerate the adoption of local data processing. Recent studies have shown that more than 90\% of data will be stored and processed locally in the near future \cite{JSAC19:Wang}. The salient features of MEC attract not only researchers but also investors from the capital market. An analyst from Goldman Sachs believes that MEC will change the world we live in \cite{GlomanSachs}.

MEC might also boost the widespread use of federated learning. Federated learning \cite{Google17:McMahan}, an emerging division of machine learning, allows collaboratively training a shared model with distributed data, without the need for centralized storage at a cloud. Moreover, federated learning endeavors to address the privacy issue of users who would hesitate to upload their private data to a remote cloud. These two prominent features fascinate the industry. Google applies federated learning to the AI-enabled application Gboard for mobile users \cite{Others18:Hard}, and the open-source framework FATE of federated learning was published by WeBank in April 2019. Some other instances include FedVision, emoji prediction \cite{Others19:Ramaswamy}, anti-money laundering with multiple banks \cite{LINK:FDAI}, etc. Finally, federated learning coupled with MEC is considered as one of the most promising solutions to the AI-driven service provision. 

A plethora of studies concentrate on federated learning \cite{TIST:Yang}, \cite{AAAI19:Yu}, which has already become a hot topic in both academia and industry in recent years. Starting from the impressive work in \cite{Google17:McMahan}, researchers focus on the performance improvement of federated learning \cite{Infocom19:Ou}, \cite{Infocom19:Tran}. They studied the comparison of synchronous and asynchronous aggregations \cite{Others18:Zhao}, the compression of information exchanged in the global aggregation \cite{Others19:Sattler}, \cite{Others19:Wang}, the control algorithm to trade off local updates and global aggregations \cite{Infocom18:Wang}, etc. The security and privacy issue of federated learning is another popular topic \cite{CCS17:Bonawitz}, \cite{Others18:Bagdasaryan}. For example, the chained anomaly detection scheme \cite{Applied18:Preuveneers}, secure global aggregation algorithms \cite{Others19:Bhowmick}, and the privacy-preserving mechanism \cite{Infocom19:Wang} have been proposed in the past two years. In these studies, a critical and optimistic assumption is that voluntary participation of local node is required, without any returns, which does not hold in realistic scenarios of MEC. 

The incentive mechanism is essential and crucial to federated learning in MEC. Since learning operations at edge nodes will consume various resources, such as battery, bandwidth, and computation power, rational edge nodes would not like to get involved in this voluntary collaboration, without any compensation \cite{Ton16:Yang}. Moreover, although federated learning does not need edge nodes to upload their raw data to the remote cloud for the privacy concern, smart malicious attackers may still infer the source information from model parameters \cite{Others19:Kang}. Some potential threats aggravate the reluctance of participation for edge nodes. For service providers, the performance of federated learning is negatively impacted, without sufficient participation of high-quality nodes \cite{ICC19:Nishio}. In sum, the incentive mechanism is indispensable for federated learning. 

Unfortunately, previous incentive mechanisms in other scenarios cannot be directly applied to federated learning in MEC. Most importantly, there exists a widening resource gap between different edge nodes \cite{JSAC19:Wang}, and this gap might deteriorate the performance of federated learning. Consequently, the proposed incentive scheme should encourage more participation of high-quality nodes and choose them to eventually improve the performance of federated learning. Furthermore, resources provided by edge nodes are multi-dimensional and dynamic, and $K$ edge nodes are selected in each game. Besides, the proposed scheme should not introduce much computational cost and communication overhead since these resources are constrained at some nodes. In short, these prominent features should be considered seriously in the design of incentive mechanism in MEC.

In this paper, we study the incentive problem to motivate more high-quality edge nodes with low cost to participate in collaborative learning and eventually improve the performance of federated learning in MEC. To achieve this goal, we borrow and extend the model of multi-dimensional procurement auction proposed by Che in \cite{Economics93:Che}. The aggregator broadcasts bid asks with the selection criteria before participators separately submit bids containing resource qualities and the expected payment. Then, the aggregator chooses $K (K\ge1)$ winners according to the sorted scores. We provide each node with a unique Nash equilibrium strategy to maximize the expected profit, and give guidance to the aggregator to obtain the expected resources, both of which are among the most challenging tasks in the design of incentive scheme. To demonstrate the performance of our proposal, we implement a smart simulator and test with multiple datasets and learning models. We also deploy a real system with 32 nodes.

The main contributions of this paper are three-fold. 
\begin{enumerate}
	\item We present a multi-dimensional incentive framework FMore for federated learning. FMore covers a series of scoring functions and is Pareto efficient for some specific cases. It uses game theory to derive optimal strategies for the edge participators, and leverages the expected utility theory to guide the aggregator to effectively obtain the desired resources.
	\item The proposed scheme is lightweight and Incentive Compatible (IC). The computational overhead and communication costs are negligible in the realistic deployment, and IC indicates that it is useless for edge nodes to declare false qualities in FMore. 
	\item The results of extensive simulations show that FMore is able to speed up federated training via reducing training rounds by  51.3\% on average and improve the model accuracy by 28\% for the tested CNN and LSTM models. Real implementations with 31 edge nodes and one aggregator also witness the improvement of model accuracy by 44.9\% and the reduction of training time by 38.4\%. 
\end{enumerate}

The remainder of this paper is organized as follows. Section II introduces the system model and some preliminaries. In Section III, we present the proposed incentive scheme FMore with the multi-dimensional auction, followed by some theoretical results in Section IV. Extensive performance evaluations are presented in Section V. Section VI surveys related work, and Section VII concludes the whole paper. 

%\vspace{0.1cm}
\section{System Model and Preliminaries}
\subsection{System Model}
In this paper, we consider a typical MEC network, where edge nodes such as micro servers, home gateways, laptops, and sensors, are connected to a remote cloud. The number of potential edge nodes is large, and various resources of each node are dynamic. Edge nodes have constrained resources for federated learning since they have other important tasks. In Fig. \ref{MCEsystem}, an aggregator exists in the remote cloud to orchestrate federated learning with distributed edge nodes. To obtain a well-trained global model, the aggregator has motivations to pay for the recruited edge nodes. Edge nodes demonstrate no intention to upload their private data to the remote cloud, and would not like to offer their dynamic resources to the aggregator unless they are paid for their contributions. Both the aggregator and edge nodes are assumed to obey the contracts they negotiate, and edge nodes are also assumed to be trustable that they will provide what they bid. Many techniques such as blacklist can be applied to the defaulter. Similar to \cite{Economics93:Che}, \cite{TMC18:Lin}, we also adopt the independent private value model for edge nodes (sellers) and aggregator (buyer). Finally, some threats, e.g. collusion attacks and false name attacks, are not considered in this paper. In Table \ref{Notations}, we summarize some notations frequently used through this paper. 

\begin{figure}[tp]
\centering
\includegraphics[scale = 0.5, width = 8.8cm]{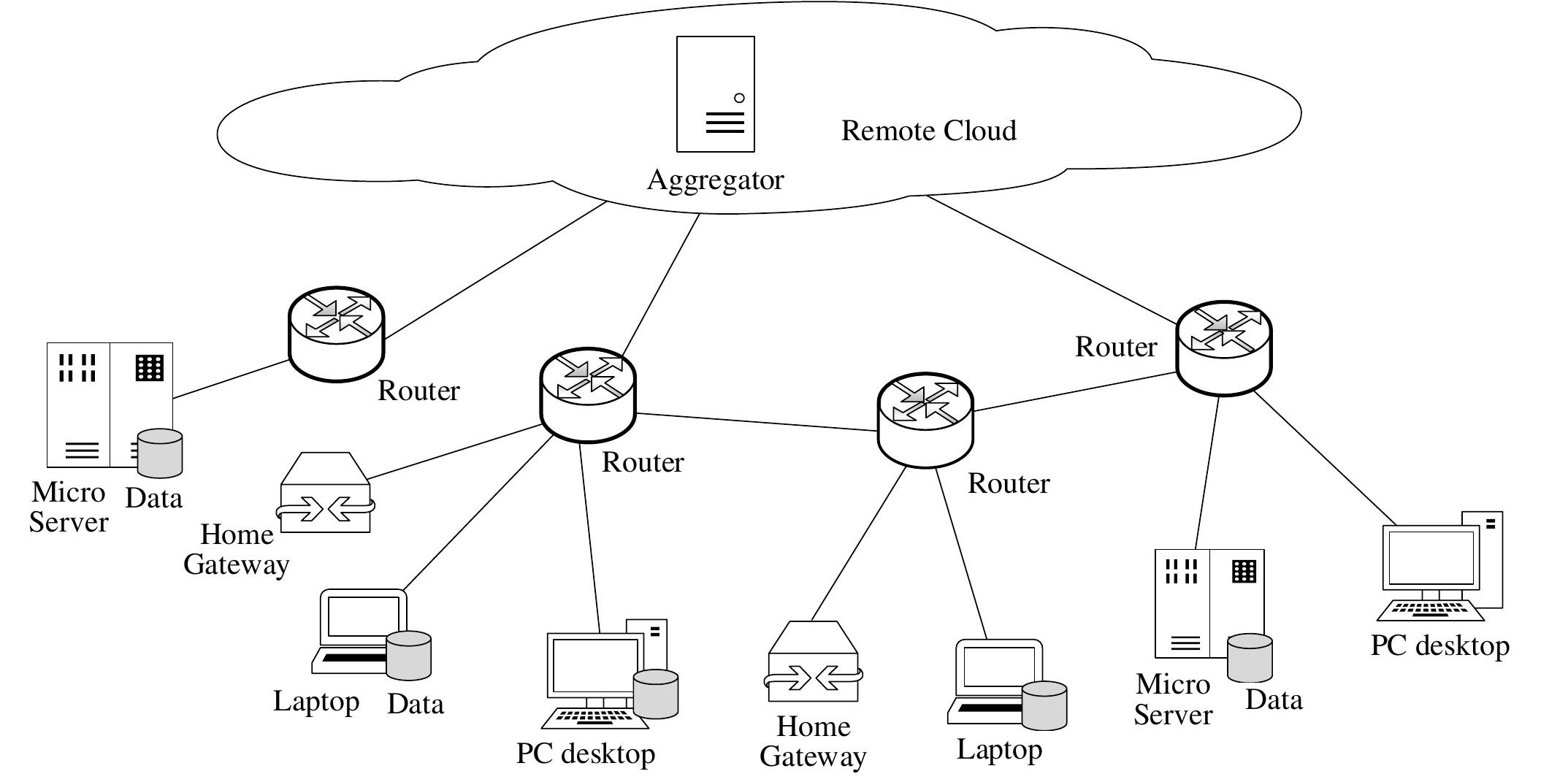}
\vspace{-0.1cm}
\caption{The system model of mobile edge computing} 
\label{MCEsystem}
\vspace{-0.2cm}
\end{figure}

\subsection{Preliminaries}
Federated learning is designed to train a shared global model that minimizes the global loss function $F(w)$ in a cooperative and distributed manner \cite{Google17:McMahan}. Formally, the goal of federated learning is to find model parameters $w^*$, which satisfy \begin{equation}
w^*=arg \ min \  F(w).
\end{equation}
Typically, the training process takes a number of rounds to converge. In each round, the aggregator randomly chooses $K$ nodes from all the $N$ edge nodes, and then distributes the global parameter $w(t)$, where $t = 0,1,...,{T-1}$ is the iteration index, to those selected nodes. Based on the global parameter $w(t)$, the chosen node trains the shared model with its local data, i.e., 
\begin{equation}
w_i(t+1)=w(t)-\eta \nabla F_i(w_i(t)),
\label{Eq:localtraining}
\end{equation}
where the parameter $\eta$ is the step size. After the local training, these nodes upload their model parameters to the aggregator, and the aggregator generates global parameters of $t+1$ as 
\begin{equation}
w(t+1)=\frac{\sum_{i=1} ^ N D_i w_i(t+1)}{\sum_{i=1}^N D_i},
\label{Eq:globalaggregation}
\end{equation}
where $D_i$ is the data size of node $i$. Then, the aggregator will initialize the next round of training by randomly choosing $K$ nodes. When the accuracy of global model satisfies the requirement or the training time exceeds the predefined threshold, this training process terminates. Briefly, federated learning consists of many iterations of global aggregation and local training in Fig. \ref{Procedure}(a). Also, model accuracy and training rounds are two critical performance metrics. Finally, we should mention that our proposed scheme can be applied to this classic federated learning \cite{Google17:McMahan} as well as other paradigms \cite{Others18:Kim}. 

\begin{table}[tp]
\caption{The notations frequently used in this paper}
\vspace{-0.15cm}
\label{Notations}
\centering
\begin{tabular}{ll}
\hline
 Notation & Description \\
\hline 
$N, K$ & total number of edge nodes and size of winner set\\
$m$ & number of resource types \\
$\mathbb{W}, \mathbb{N}$ & winner set and edge node set\\
$q_{ij}$ & quality of $j$th resource of user $i$\\
$\mathbf{q_i}$ & quality vector of user $i$\\
$\hat{q}_i$ &  declared quality of $i$th resource\\ 
$p_i$ & payment of user $i$\\
$\theta_i$ & private cost parameter of user $i$\\
$F(\theta_i)$& cumulative distribution function of $\theta_i$\\
$f(\theta_i)$ & probability density function of $\theta_i$\\
$S(\cdot)$ & scoring function given by the aggregator\\
$c(\cdot)$ & cost function \\
$\pi_i(\cdot), V(\cdot)$ & profit functions of user $i$ and the aggregator \\
$U(\cdot)$ & utility function of the aggregator\\
$t_i^{ne}$ & Nash equilibrium strategy of node $i$\\
$t_{-i}^{ne}$ & Nash equilibrium strategies except node $i$ \\
$\psi$ & probability of edge node being selected \\
$[K]$ & auction with $K$ winners\\
\hline
\end{tabular}
\vspace{-0.1cm}
\end{table}

\vspace{0.1cm}
\section{FMore: The Proposed Incentive Scheme}

In this section, we present the incentive mechanism FMore based on the multi-dimensional procurement auction and detail the design rationale for each step in FMore. To explicitly illustrate our proposal, we also describe a walk-through example with five edge nodes. Further discussions are provided for specific scenarios as well. 

\begin{figure}[tp]
\centering
\includegraphics[scale = 0.4, width = 9cm, height = 4.8cm]{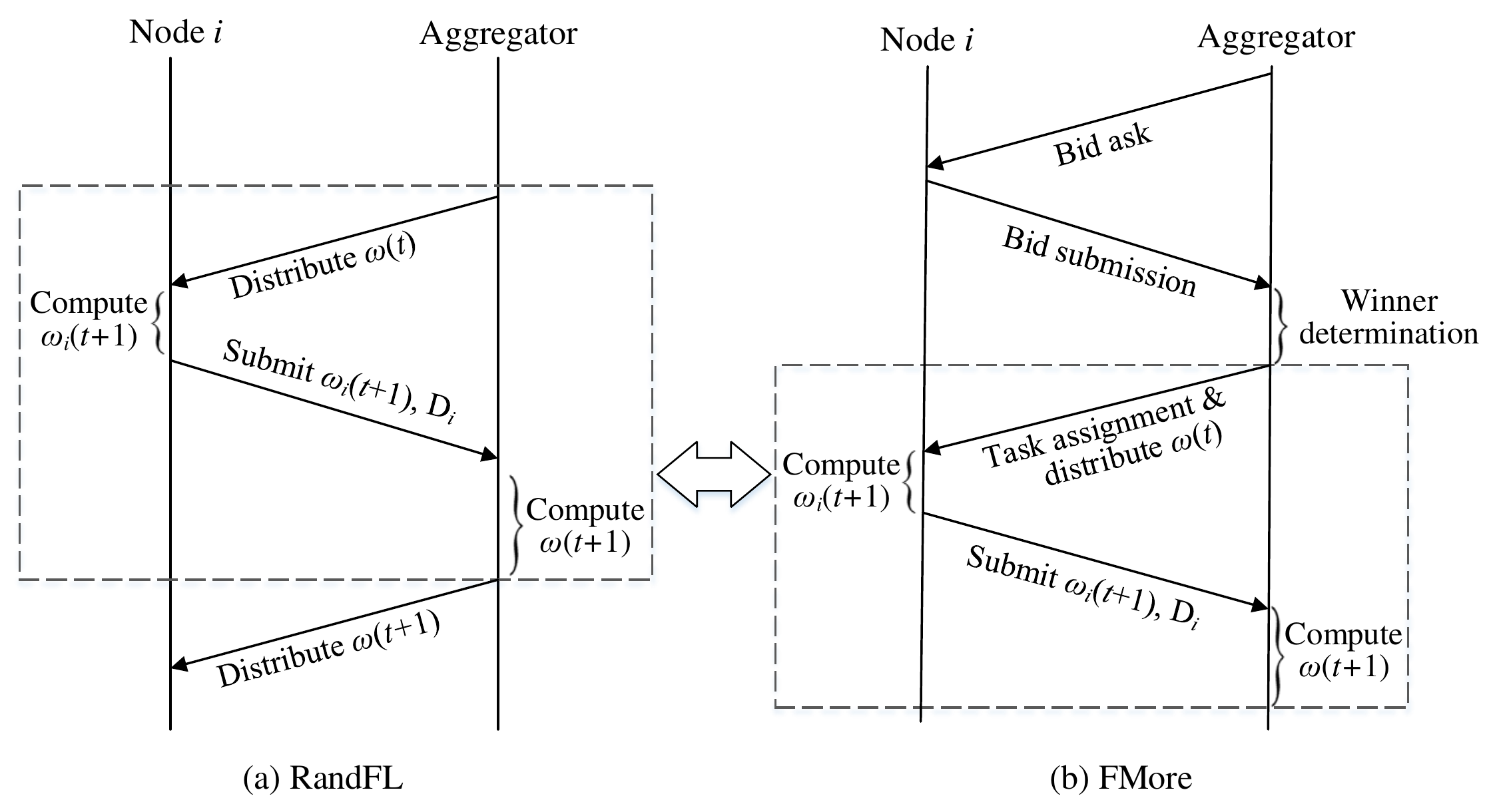}
\vspace{-0.5cm}
\caption{The procedure of RandFL and FMore} 
\label{Procedure}
\vspace{-0.1cm}
\end{figure}

\subsection{The Description of FMore} 
The proposed incentive framework FMore consists of six steps, i.e., bid ask, bid collection, winner determination, task assignment, local training, and global aggregation, in each round of training, as shown in Fig. \ref{Procedure}(b). The latter three steps are similar to the classic federated learning in \cite{Google17:McMahan} (referred to as RandFL). The computational cost and communication overhead are only introduced in the former three steps, which should be considered seriously in the design of FMore. 

\textbf{(1) Bid Ask:}
In each round of federated learning, the aggregator initially broadcasts a scoring rule $S(q_1, \cdots, q_m, p)$, where $\mathbf{q} = (q_1, \cdots, q_m)$ is the quality vector of resources, and $p$ is the expected payment that the edge node bids with the provision of $\mathbf{q}$. The resources considered in FMore include local data, computation capability, bandwidth, CPU cycle, etc. In addition, the aggregator leverages the scoring function to choose participators. We formulate $S(\mathbf{q_i},p_i)$ as a quasi-linear function
\begin{equation}
S(q_{i1}, q_{i2}, \cdots, q_{im}, p_i) = s(q_{i1}, q_{i2}, \cdots, q_{im}) - p_i,
\end{equation}
where subscript $i = 1, 2, \cdots, N$ is the node index. Comparing with the size of model parameters, we can neglect communication overhead in this step. This is because only a score function and simple requirements are delivered from the aggregator to edge nodes, and the corresponding data size is just a few bytes.

Many scoring functions can be included in FMore. For instance,  $s(\cdot)$ can be set as the utility function $U(q_1, \cdots, q_m)$ of the aggregator. Some classic utility functions include the perfect substitution utility function, the perfect complementary function, and the general Cobb-Douglas function, which are separately denoted as 
\begin{eqnarray*}
&&s(\cdot) = \alpha_1 q_1 + \cdots + \alpha_m q_m, \\
&&s(\cdot) = min \{\alpha_1 q_1, \cdots, \alpha_m q_m\},\\
&&s(\cdot) = q_1^{\alpha_1}q_2^{\alpha_2} \cdots q_m^{\alpha_m},
\end{eqnarray*}
where $\alpha_1, \cdots, \alpha_m$ are coefficients. We may add the constraint $\sum \alpha_i = 1$, but it is not imperative. In the above functions, the additive form is preferred to perfect substitution resources such as GPU and CPU, while the perfect complementary form might be the best choice for scenarios where both bandwidth and computing power are considered simultaneously. 

\textbf{(2) Bid Collection:}
When edge nodes receive a bid ask with scoring function $S(\cdot)$, they separately base on their available resources to decide whether to bid or not. According to the private value model in \cite{Ton16:Yang}, edge node $i$ has a private cost parameter $\theta_i$, and then it can get the private cost function $c(q_1, \cdots, q_m, \theta_i)$. Note that the cost function $c(\cdot)$ is an increasing function of $q_i$. In this paper, we assume single crossing conditions $c_{qq} \ge 0$, $c_{q\theta} >0$, and $c_{qq\theta} \ge 0$, which mean the marginal cost increases with the parameter $\theta$. Before bidding, each node learns its private cost parameter $\theta$ and gets the Cumulative Distribution Function (CDF) $F(\theta)$ from the historical data. It is assumed that $\theta_i$ is independently and identically distributed over the range of $[\underline{\theta}, \overline{\theta}] \ (0<\underline{\theta} < \overline{\theta} <\infty)$. There also exists a positive and continuously differentiable density function $f(\theta)$.  

How much to bid? As a rational edge node, node $i$ needs to choose $\mathbf{q_i}$ and $p_i$ to maximize the following profit function
\begin{equation}
\pi_i(q_1, \cdots, q_m,p_i) = p_i  -c(q_1, \cdots, q_m, \theta_i).
\end{equation}
In this optimization problem, one of the constraints is Individual Rationality (IR), which implies that any node will not participate in federated learning when its profit is negative. In other words, $\pi_i(\mathbf{q_i},p_i) \ge 0$. Let $\pi_i(\mathbf{q_i},p_i) = 0$ denote that node $i$ will not join in the training. In the next section, we will present the theoretical results of optimal strategy for each edge node to compete with others. 

When edge node $i$ submits its bid $(\mathbf{q_i},p_i)$ to the aggregator, the technique of sealed-bid auction is adopted, indicating that this bid is only known to the aggregator and node $i$. The sealed-bid auction is quite suitable for network scenarios and can be easily implemented by FMore. 

\textbf{(3) Winner Determination:} 
When the aggregator collects sufficient bids or the timer with a predefined threshold expires, the aggregator finishes the bid collection process. Then, it starts to determine the winners. In this paper, we extend the classic multi-dimensional auction to multiple winners. In the winner determination, the aggregator has to maximize the profit function $V(\cdot)$ as 
\begin{equation}
V = \sum_{ i \in \mathbb{W}} (U(q_{i1}, q_{i2}, \cdots, q_{im}) - p_i),
\vspace{-0.1cm}
\end{equation}
where $\mathbb{W}$ is the winner set, and $U(\cdot)$ is the utility function of $\mathbf{q}=(q_1, q_2, \cdots, q_m)$. Similar to the literature \cite{Economics93:Che}, we also assume that $U'(\cdot) \ge 0$, $U''(\cdot)< 0 $, $\lim_{q = 0} U'(q) = \infty $, and $\lim_{q = \infty} U''(q) = 0$. Moreover, the constraint of IR, i.e., $V \ge 0$, should be satisfied for the rational aggregator as well. 

In FMore, the aggregator chooses $K$ edge nodes with the best scores to construct the winner set $\mathbb{W}$. The parameter $K$ is decided by the aggregator and can be estimated with historical data. Besides the winner determination, the aggregator has to perform the payment allocation. Both the first-price auction and the second-price auction can be applied to FMore. We use the first-price auction for simplicity in this paper. 

\textbf{(4) Task Assignment, Local Training and Global Aggregation:}
The last three steps are similar to the classic federated learning RandFL, where winners locally train the model with declared resources, according to Eq. (\ref{Eq:localtraining}). After finishing local updates, they submit the result of model parameters to the aggregator and then obtain the corresponding payment. If any edge node does not comply with the contract, it will be put into the blacklist by the aggregator.  

The pseudocode of FMore is given in Algorithm 1. Compared with RandFL, our scheme FMore just adds one round of information exchange between edge nodes and the aggregator, and the total communication cost is a linear function of $N$. The computational cost contains the calculation of optimal strategy at each edge node and the sorting operation at the aggregator. The time complexity of optimal strategy computation is linear, which can be found in Section IV. Thus, our proposed scheme is lightweight, which is much more appropriate for MEC.  

\begin{figure}[!tp]
\centering
\includegraphics[scale = 0.8, width = 8.9cm]{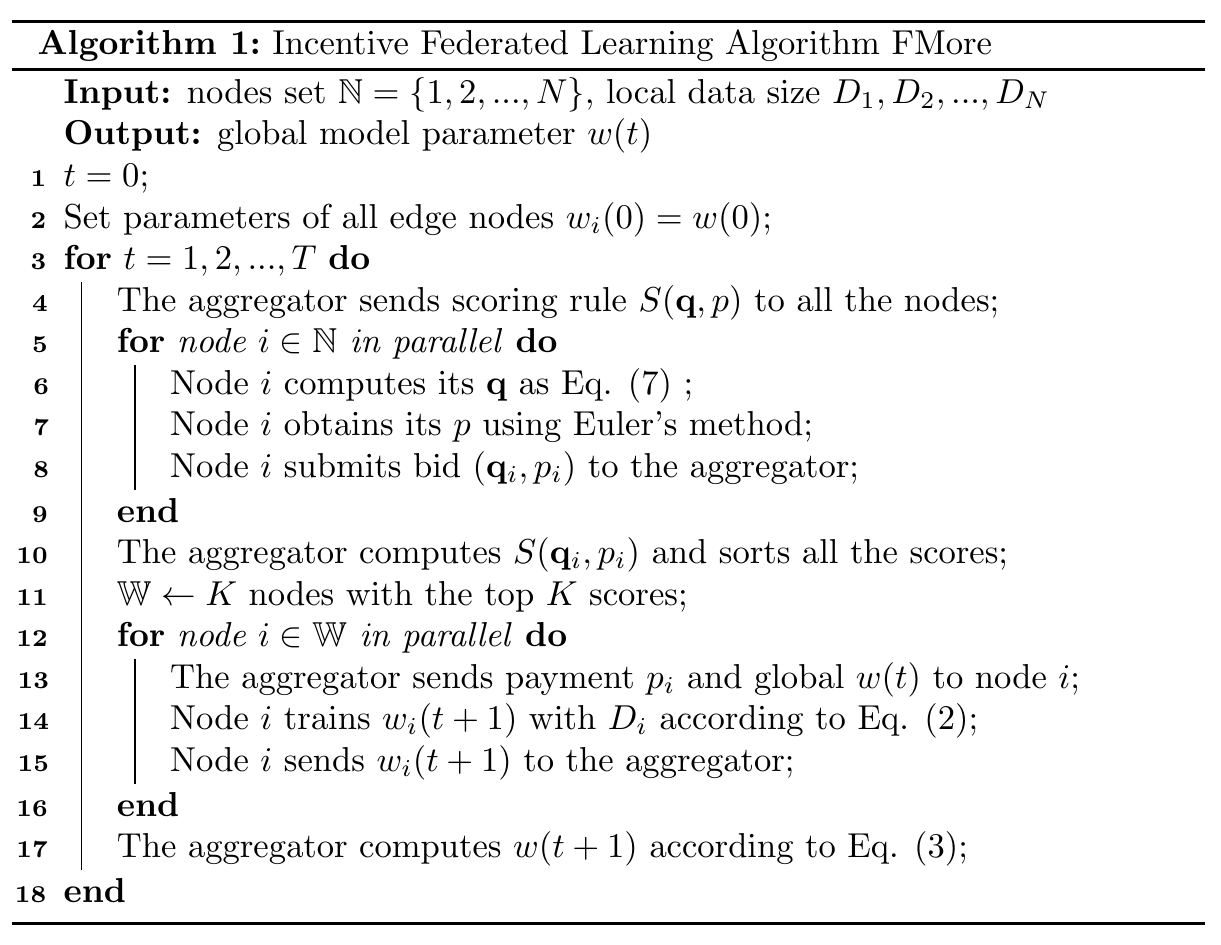}
\vspace{-0.5cm}
\end{figure}

\subsection{A Walk-Through Example}
In Fig. \ref{Example}, we present a walk-through example with five edge nodes ${\mathbb{N}=\{A, B, C, D, E\}}$ and consider two types of resources, i.e., training data and bandwidth. The data size and bandwidth are separately over the range of $[1000, 5000]$ and $[5Mb, 100Mb]$. The public scoring function is set as $S(\mathbf{q},p)=min \{\alpha_1 q_1, \alpha_2 q_2\}-p$, where coefficients $\alpha_1$ and $\alpha_2$ are to balance different types of resources and both set to 0.5. In addition, $q_1, q_2$, and $p$ are normalized by the technique of min-max normalization to compute the scores for simplicity. It should be noted that the strategy for each node might not be optimal in this example, and we will provide the Nash equilibrium strategy to a rational node in Section IV. 

In the first round of training, these five edge nodes individually submit the bids $(q_1, q_2, p)$ as $(4000, 85Mb, 0.20)$, $(3000, 35Mb, 0.10)$, $(3500, 75Mb, 0.18)$, $(5000, 85Mb, 0.20)$, and $(5000, 100Mb, 0.20)$. After collecting all the bids, the aggregator computes the scores, sorts them in the descending order, and chooses three winners ($K=3$ and $\mathbb{W} = \{A, D, E\}$) with top three scores. The payments for winners are 0.175, 0.221, and 0.300 in the first-price auction. The aggregator distributes the global parameters to these three winners for their local learnings. When winners finish local trainings, the aggregator performs the global aggregation as Eq. (\ref{Eq:globalaggregation}) and then terminates this round of training. 

In the second round, these nodes might change their bids as $(4000, 85Mb, 0.16)$, $(3500, 45Mb, 0.1)$, $(4000, 80Mb, 0.15)$, $(4000, 80Mb, 0.2)$, and $(5000, 100Mb, 0.3)$. We take node $C$ as an example to illustrate the dynamic provision of resources. The reasons why node $C$ changes its bid include but not limit to: (1) the available resources are changed; (2) the private cost parameter $\theta$ is reestimated and revised; and (3) node $C$ trades revenue for others such as reputation. Any unknown reason leads to that node $C$ submits the bid as $(4000, 80Mb, 0.15)$ and it is ranked as the first this time. The selection of winners is similar to the last round, and the set $\mathbb{W} = \{A, C, E\}$ is constructed. These three nodes are responsible for local training in this round, and payments are 0.16, 0.15, and 0.3 in the first-price auction. Similarly, we perform the processes of bid ask, bid collection, winner determination, task assignment, local training, and global aggregation iteratively until the model accuracy satisfies the requirement.  

\begin{figure}[tp]
\centering
\includegraphics[scale = 0.5, width = 8.6cm]{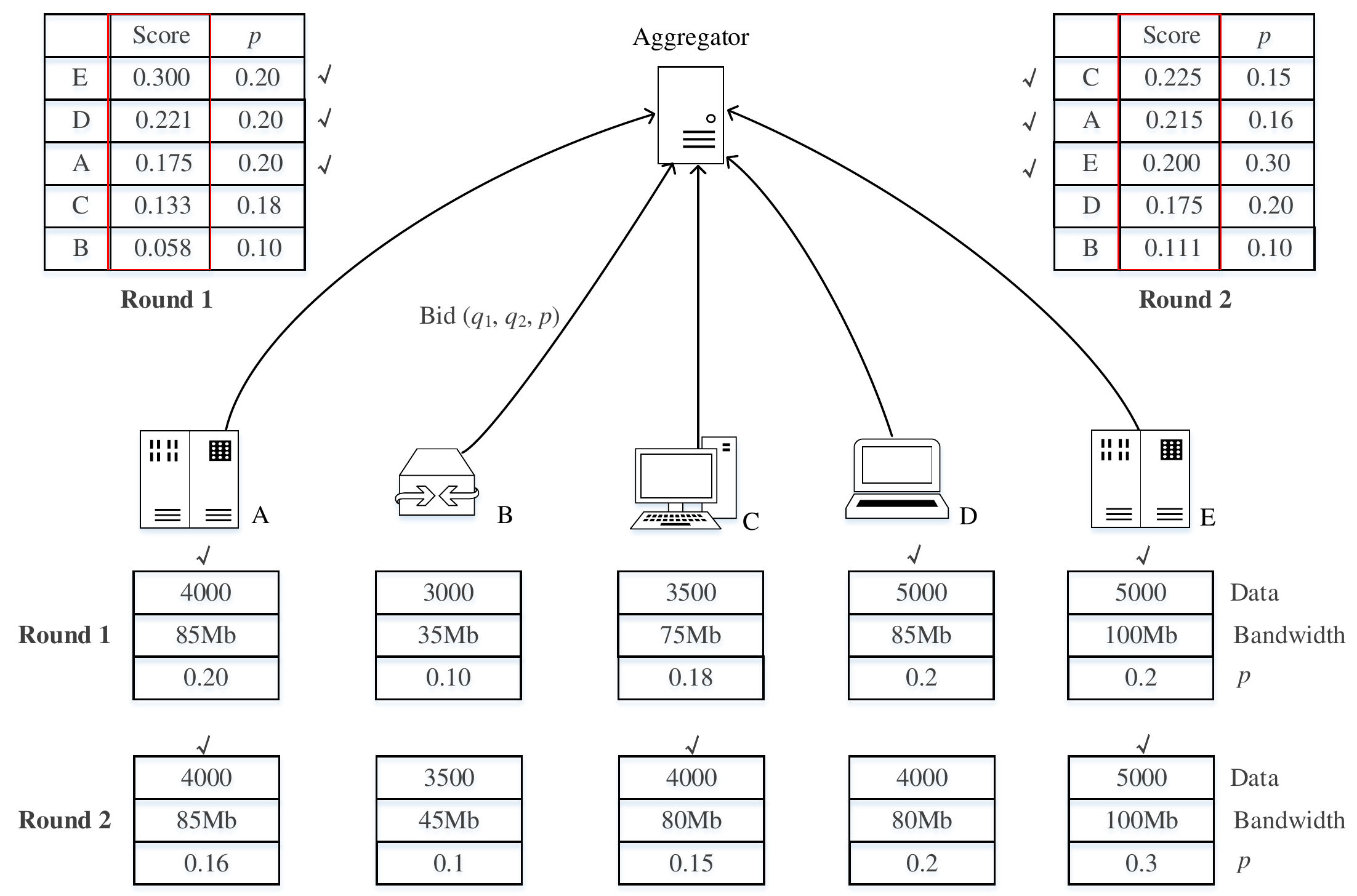}
\caption{The example of FMore with five nodes} 
\label{Example}
\vspace{-0.1cm}
\end{figure}

\subsection{Discussion}
In MEC, the resource provision is dynamic and distinct, and there exist some nodes that have sufficient local data with high quality. However, the situation is changed in some other scenarios where the resources of participators are relatively stable and the local data size is tremendously small for each node. In such catastrophic cases, selecting fixed nodes with limited data and inferior-quality resources may negatively affect the performance of federated learning. For instance, the overfitting problem is frequently encountered in such scenarios. To tackle these problems, we extend FMore for widespread application.

In the winner determination phase of FMore, $K$ top-score nodes are definitely added to the winner set. Now, we revise this step as follows: nodes in the descending order of scores will be individually added to the winner set $\mathbb{W}$ with probability $\psi$ until $K$ nodes are chosen in the set $\mathbb{W}$. This can be achieved by changing Line 11 of Algorithm 1. We name this extension as $\psi$-FMore, and FMore is a special case of $\psi$-FMore with $\psi =1$. In $\psi$-FMore, we can construct the winner set $\mathbb{W}$ of $K$ nodes with probability $Pr(\psi) = \sum_{i=0}^{N-K} C_{i+K}^i (1-\psi)^i\psi^K$. It can be easily verified that the probability $Pr(\psi)$ approaches to one with many appropriate parameters. In addition, the parameter $\psi$ should be carefully set to balance the model accuracy and the training speed in extreme scenarios, since small $\psi$ might deteriorate FMore into the classic federated learning RandFL. In Section V, we demonstrate the impacts of parameter $\psi$ on the performance of federated learning. 

\section{Optimal Strategy and Utility Analysis}
In this section, we first analyze Nash equilibrium strategies for edge nodes and present closed-form theoretical results. The impacts of parameters $N$ and $K$ are also studied here. Then, we provide guidance to the aggregator to get the expected resources. Finally, we also prove that FMore is Pareto efficient and IC.

In FMore, it is the key to discover the Nash equilibrium strategy for an edge node. The Nash equilibrium strategy $t_i^{ne}$ indicates that it is the optimal choice for node $i$, no matter what strategies other players choose. We present the definition of Nash Equilibrium in our auction as follows:
\begin{myDef}
Nash Equilibrium: The strategy set of all the participators $(t_1^{ne}, t_2^{ne},\cdots, t_N^{ne})$ is a Nash Equilibrium, if any edge node $i$ has 
\begin{equation*}
\pi_i (t_i^{ne}, t_{-i}^{ne}) \ge \pi_i (t_i, t_{-i}^{ne}), t_i \ge 0,
\end{equation*} 
where $t_{-i}^{ne} = (t_1^{ne}, \cdots, t_{i-1}^{ne}, t_{i+1}^{ne}, \cdots, t_N^{ne}) $, and $t_i \in S_i$ is any of the strategies for node $i$. 
\end{myDef}

The Nash equilibrium strategy $t_i^{ne}$ for edge node $i$ consists of two components, i.e., the qualities of resources and the expected payment. For the former part, Che's Theorem 1 has already found that the choice of quality is only relevant to the private cost parameter $\theta$. In other words, Che's Theorem 1 shows that quality can be independently chosen. Then, we provide the unique Nash equilibrium strategy in the one-winner game in Che's Theorem 2 and extend the theoretical results to the two-winners game in Proposition 1. Since the proofs of Che's Theorem 1 and Che's Theorem 2 can be referred to in \cite{Economics93:Che}, we omit here for space limitation. We also omit the subscript $i$ in the following analysis for simplicity. 

\begin{CheConclusion} 
In the first-price auction with $K (K \ge 1)$ winners, the quality of resource is chosen at $q_s(\theta)$ for all $\theta \in [\underline{\theta}, \overline{\theta}]$, where $q_s(\theta) = arg \ max \ s(q)-c(q, \theta)$.
\end{CheConclusion}

\begin{CheConclusion} 
The unique Nash equilibrium strategy $t^{ne}(\theta)=(q_s(\theta), p_s(\theta))$ for each node in the first-price auction with one winner is given as 
\begin{eqnarray*}
&&q_s(\theta) = arg \ max \ s(q)-c(q,\theta) \\
&&p_s(\theta) = c(q_s,\theta) +  \int_{\theta}^{\overline{\theta}} c_{\theta}(q_s(t), t) \left( \frac{1-F(t)}{1-F(\theta)} \right) ^{N-1} dt
\end{eqnarray*} 
\end{CheConclusion}

\begin{myPosit} 
The unique Nash equilibrium for each node in the first-price auction with two winners can be denoted as 
\begin{eqnarray*}
&&q_s(\theta) = arg \ max \ s(q)-c(q,\theta) \\
&&p_s(\theta) = c(q_s,\theta) +  \int_{\theta}^{\overline{\theta}} c_{\theta}(q_s(t), t) \left( \frac{1-F(t)}{1-F(\theta)} \right) ^{N-2} dt
\end{eqnarray*} 
\end{myPosit}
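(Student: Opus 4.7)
The plan is to mirror the derivation behind Che's Theorem~2 and change only the piece that depends on the number of winning slots. Che's Theorem~1 already settles the quality component: independently of $K$, the equilibrium quality is $q_s(\theta)=\arg\max s(q)-c(q,\theta)$, because the quality decision separates from the bidding decision. So all the work lies in recovering $p_s(\theta)$ for the two-winner first-price auction.

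I would posit a symmetric, strictly-decreasing-in-$\theta$ candidate equilibrium $p_s(\cdot)$ (decreasing because a higher-cost type must ask for a larger payment and hence submits a lower score) and consider a type-$\theta$ bidder who deviates by pretending to be type $\hat\theta$, i.e., submitting $(q_s(\hat\theta),p_s(\hat\theta))$. Under such a symmetric monotone equilibrium the deviator wins one of the two slots iff at most one of the remaining $N-1$ opponents has type below $\hat\theta$, giving interim winning probability
\[
G(\hat\theta)=(1-F(\hat\theta))^{N-1}+(N-1)F(\hat\theta)(1-F(\hat\theta))^{N-2}.
\]
The deviation profit is $\pi(\hat\theta\mid\theta)=G(\hat\theta)\bigl[p_s(\hat\theta)-c(q_s(\hat\theta),\theta)\bigr]$. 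Imposing the first-order condition at $\hat\theta=\theta$ and applying the envelope theorem yield the ODE $\pi'(\theta)=-G(\theta)\,c_\theta(q_s(\theta),\theta)$; combining this with the individual-rationality boundary condition $\pi(\overline{\theta})=0$ and solving gives
\[
p_s(\theta)=c(q_s(\theta),\theta)+\frac{1}{G(\theta)}\int_\theta^{\overline{\theta}} G(t)\,c_\theta(q_s(t),t)\,dt.
\]

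The step I expect to be the main obstacle is reducing the kernel $G(t)/G(\theta)$ to the single clean factor $\bigl((1-F(t))/(1-F(\theta))\bigr)^{N-2}$ that appears in the proposition. Factoring gives $G(\theta)=(1-F(\theta))^{N-2}\bigl[1+(N-2)F(\theta)\bigr]$, which leaves an extra order-statistic ratio $[1+(N-2)F(t)]/[1+(N-2)F(\theta)]$ that is absent from Che's one-winner kernel. Collapsing the expression to the stated form requires either an integration by parts inside the integrand that recognizes the extra factor as an exact differential absorbed against $c_\theta(q_s(t),t)$, or re-identifying $G$ with the survivor function of the ``cutoff'' opponent a bidder must beat to retain a winning slot, whose one-sided tail probability is exactly $(1-F(\cdot))^{N-2}$. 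Once the kernel is in the claimed form, uniqueness follows exactly as in Che's Theorem~2: the single-crossing assumptions $c_{qq}\ge 0$, $c_{q\theta}>0$, $c_{qq\theta}\ge 0$ make $p_s$ strictly monotone and render the first-order condition sufficient, ruling out any other symmetric candidate.
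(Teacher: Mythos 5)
Your overall strategy is exactly the paper's: quality separation via Che's Theorem~1, a symmetric monotone candidate $p_s(\cdot)$, the first-order condition at truthful reporting combined with the envelope theorem, and the boundary condition $\pi(\overline{\theta})=0$. Indeed the paper's own ``proof'' of Proposition~1 is a one-sentence remark that the only change relative to Che's Theorem~2 is that $\Pr\{\mathrm{win}\}$ becomes the sum of the probability of having the top score and the probability of having the second score. The genuine gap is precisely the obstacle you flagged, and neither of your two proposed rescues closes it. Writing $H=1-F(\theta)$ for the probability of outscoring one opponent, the paper (here and in its Theorem~1, where $g(u)=\sum_{i=1}^{K}[1-H]^{i-1}H^{N-i}$) takes the two-winner probability to be $H^{N-1}+(1-H)H^{N-2}$, \emph{without} the combinatorial factor $\binom{N-1}{1}=N-1$ on the second term; this telescopes to $H^{N-2}=(1-F(\theta))^{N-2}$, and substituting that into your own expression $p_s(\theta)=c(q_s,\theta)+G(\theta)^{-1}\int_\theta^{\overline{\theta}}G(t)\,c_\theta(q_s(t),t)\,dt$ yields the stated kernel immediately.

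Your $G(\theta)=(1-F(\theta))^{N-1}+(N-1)F(\theta)(1-F(\theta))^{N-2}$ is the standard order-statistic probability (at most one of $N-1$ i.i.d.\ opponents outscores you), and under it the stated closed form is simply not obtainable: the residual ratio $\bigl[1+(N-2)F(t)\bigr]/\bigl[1+(N-2)F(\theta)\bigr]$ is generically different from $1$, it bears no structural relation to $c_\theta(q_s(t),t)$ that an integration by parts could exploit, and the ``cutoff opponent'' survivor function is not the interim winning probability either. So if you carry your derivation to the end you obtain a payment rule carrying that extra factor, not the formula in the statement. To reproduce the proposition as written you must adopt the paper's coefficient-free winning probability; with that substitution your argument goes through verbatim, and the monotonicity/uniqueness step via the single-crossing conditions is fine as you describe it.
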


The proof of Proposition 1 is similar to Che's Theorem 2. The only difference is that the probability $Pr\{win | S(q(\theta),p)\}$ is computed as the sum of winning probability with the first score and winning probability with the second score. Interested readers can refer to \cite{Economics93:Che} for details.

\begin{myConclusion} 
The unique Nash equilibrium of each node in the first-price auction with $K$ winners can be denoted as 
\begin{eqnarray}
q_s(\theta) &= & arg \ max \ s(q)-c(q,\theta), \\
p_s(\theta) &= &c(q_s, \theta) + \int_0^{u} \left( \frac{g(x)}{g(u)} \right) dx, \\
g(u) &=& \sum_{i=1}^M [1-H(u)]^{i-1}[H(u)]^{N-i}, \\
u(\theta) &=& s(q(\theta))-c(q(\theta), \theta).
\end{eqnarray} 
\end{myConclusion}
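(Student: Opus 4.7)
The plan is to reduce the two-dimensional bidding problem to a one-dimensional one via Che's Theorem 1, then characterize the symmetric equilibrium strategy by first-order conditions and solve the resulting ordinary differential equation with an integrating-factor argument that generalizes the $K=1$ case (Che's Theorem 2) and the $K=2$ case (Proposition 1).

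First, Che's Theorem 1 pins down the quality vector: every type $\theta$ optimally announces $q_s(\theta)=\arg\max_q\{s(q)-c(q,\theta)\}$, independently of the payment decision, because the scoring rule is quasi-linear. Each bidder's remaining degree of freedom is therefore the payment $p$, or equivalently the posted score $u=s(q_s(\theta))-p$. Define the full-information ceiling $u(\theta)=s(q_s(\theta))-c(q_s(\theta),\theta)$ and let $H$ denote the CDF of $u(\theta)$ induced by $F$ under the (to-be-derived) equilibrium map $\theta\mapsto u$.

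Next, I would compute the probability that a bidder posting score $u$ ends up in the top $K$ when the other $N-1$ bidders play the same monotone strategy. The bidder wins iff at most $K-1$ competitors post a strictly higher score, so a binomial sum yields a winning probability that, after collecting terms, is proportional to $g(u)=\sum_{i=1}^{K}[1-H(u)]^{i-1}[H(u)]^{N-i}$. The bidder's expected profit then takes the clean form $g(u)\,(u(\theta)-u)$, since the net payoff conditional on winning is $p-c(q_s,\theta)=s(q_s)-u-c(q_s,\theta)=u(\theta)-u$. The first-order condition in $u$ is $g'(u)(u(\theta)-u)=g(u)$, a linear ODE for the inverse strategy, and $g(u)$ is its natural integrating factor. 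Integrating and re-expressing in terms of $\theta$ delivers
\[
p_s(\theta)=c(q_s,\theta)+\int_0^{u(\theta)}\frac{g(x)}{g(u(\theta))}\,dx,
\]
once the boundary condition that the worst type $\overline{\theta}$ earns zero expected profit is imposed (individual rationality binds at the top cost; otherwise that type could deviate to no bidding). Uniqueness then follows from the standard envelope argument: any symmetric monotone equilibrium must satisfy the envelope formula for interim expected utility, which pins down $p_s(\theta)$ up to this same boundary constant, and the single-crossing assumptions $c_{q\theta}>0$, $c_{qq\theta}\ge 0$ guarantee that the candidate is strictly monotone in $\theta$ and that no local or global deviation is profitable.

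The step I expect to be the main obstacle is the combinatorial bookkeeping that collapses the raw binomial winning probability into the compact $g(u)$ above: unlike the $K=1$ and $K=2$ special cases, the $K$-winner computation involves overlapping events over which bidder occupies which of the top $K$ slots, and one has to check that the surplus binomial coefficients cancel in the ratio $g(x)/g(u(\theta))$ so that the FOC aggregates into a single linear ODE. Once that reduction is in hand, the ODE manipulation is a direct lift of Che's one-winner proof, and the verification of the boundary and monotonicity conditions proceeds exactly as in Che's Theorem 2.
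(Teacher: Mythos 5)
Your proposal follows essentially the same route as the paper's proof: quality is fixed by Che's Theorem 1, the expected profit is written as the margin $u(\theta)-u$ times the top-$K$ winning probability $g(u)$, the first-order condition yields a first-order linear ODE solved with $g$ as the integrating factor, and the boundary condition recovers $p_s(\theta)=c(q_s,\theta)+\int_0^{u}\frac{g(x)}{g(u)}\,dx$. The combinatorial step you flag as the main obstacle is treated no more carefully in the paper itself, which likewise writes the winning probability as $\sum_{i=1}^{K}[1-H(u)]^{i-1}[H(u)]^{N-i}$ without binomial coefficients, so your plan is faithful to (and no less rigorous than) the published argument.
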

\begin{proof}
The expected profit $\pi(\cdot)$ of edge node is denoted as 
\begin{equation}
\pi (q, p | \theta) = (p-c(q(\theta), \theta)) Pr\{win | S(q(\theta),p)\}.
\label{GoalPi}
\end{equation}
We define the maximum score $u = X(\theta)$ and $u_0$ as 
\begin{eqnarray*}
u &=& s(q(\theta))-c(q(\theta), \theta), \\
u_0 &=& s(q_s(\theta))-c(q_s(\theta), \theta).
%u = X(\theta) = max_q [s(q)-c(q,\theta)] = s(q_s(\theta))-c(q_s(\theta), \theta).
\end{eqnarray*}
Since the CDF of $\theta$ is $F(\theta)$, we can use the Envelope theorem to get the CDF of $X(\theta)$ as $H(x) = 1-F(X^{-1}(x))$. We also define $b(u) = b(X(\theta)) = S(q(\theta),p)$. Then,  the expected profit $\pi (q, p |\theta)$ can be represented by 
\begin{eqnarray*}
\pi (q, p | \theta) &=& (u_0-b(u)) g(u),\\
g(u) &=& g(\theta) = Pr\{win | X(\theta)\} \\ 
&=& \sum_{i=1}^K [1-H(X(\theta))]^{i-1}[H(X(\theta))]^{N-i} \\
&=& \sum_{i=1}^K [1-H(u)]^{i-1}[H(u)]^{N-i}
\end{eqnarray*}
To get the maximum $\pi(q, p | \theta)$, we have
\begin{equation*}
\frac{\partial \pi}{\partial u} = -b'(u)g(u) +(u_0-b(u))g'(u), 
\end{equation*}
and $\frac{\partial \pi}{\partial u} | _{u =u_0} =0$. Define $\varphi(u) = \frac{g'(u)}{g(u)}$. Then, we can easily get the first order linear differential equation 
\begin{eqnarray}
b'(u_0) + \varphi(u_0) b(u_0) = u_0 \varphi(u_0). 
\end{eqnarray}
We can solve this equation to get a unique $b(u)$ with the initial condition $b(0) = 0$. Then, we can get 
\begin{equation*}
b(u) = u - \int_0^{u} \left( \frac{g(x)}{g(u)} \right) dx. 
\end{equation*}
Since the quality $p_s(\theta) - c(q_s, \theta) = u - b(u)$ holds at the equilibrium point, we can get the equilibrium $p_s(\theta)$ as 
\begin{equation*}
p_s(\theta) = c(q_s,\theta) + \int_0^{u} \left( \frac{g(x)}{g(u)} \right) dx.
\end{equation*}
Thus, we obtain the conclusion. 
\end{proof}

It should be noted that the closed-form of $p_s ({\theta})$ is extremely complicated. We can use classic numerical methods, e.g., the Euler method and the Runge-Kutte method, to get the result of $p_s(\theta)$. Here, the Euler method can be described as 
\begin{eqnarray}
\frac{dy}{dx}&=&f(x,y),\\
y_{n+1} &=& y_n + f(x_n, y_n)h,
\end{eqnarray}
where $h$ is the step size. Eq. (12) can be represented like Eq. (13). Then, we can get $p_s(\theta)$ with the complexity of linear time. 

\begin{myConclusion} 
In a game with $K$ winners, the expected profit of edge node $\pi (q,p | \theta)$ is a decreasing function of the total node number $N$. 
\end{myConclusion}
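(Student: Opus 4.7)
The plan is to substitute the Nash equilibrium payment $p_s(\theta)$ derived in Theorem 1 into the profit expression $\pi(q,p\mid\theta) = (u_0 - b(u))g(u)$ and exploit the cancellation of $g(u)$ to obtain a closed form whose dependence on $N$ is transparent. Writing $b(u) = u - \int_0^u g(x)/g(u)\,dx$ and evaluating at the equilibrium point $u = u_0(\theta)$, the factor $g(u_0)$ cancels with the denominator inside the integral, leaving
\begin{equation*}
\pi(q_s, p_s \mid \theta) \;=\; \int_0^{u_0(\theta)} g(x)\, dx,
\end{equation*}
where $u_0(\theta) = s(q_s(\theta)) - c(q_s(\theta), \theta)$.

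Next I would observe that the upper limit $u_0(\theta)$ is independent of $N$: by Che's Theorem 1, the equilibrium quality $q_s(\theta) = \arg\max\,s(q)-c(q,\theta)$ is determined solely by the scoring rule and the private cost parameter. Hence the entire $N$-dependence of the equilibrium profit is concentrated in the integrand $g(x) = \sum_{i=1}^{K}[1-H(x)]^{i-1}[H(x)]^{N-i}$, and it suffices to prove that $g(x)$ is pointwise non-increasing in $N$, with strict decrease on a subset of $(0, u_0(\theta))$ of positive measure.

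For the monotonicity I would fix $x$ and argue term by term. Since $H(x) = 1 - F(X^{-1}(x))$ lies in $[0,1]$, the factor $[H(x)]^{N-i}$ is non-increasing in $N$ for every $i \le K$ and is strictly decreasing whenever $H(x) \in (0,1)$; the companion factor $[1-H(x)]^{i-1}$ does not involve $N$. Each summand is therefore non-increasing in $N$, hence so is $g(x)$, and after integration so is $\pi$. Intuitively, this just restates the familiar fact that the chance of finishing in the top $K$ out of $N$ can only shrink when more competitors enter the auction.

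The main obstacle, in my view, is not the algebra but the bookkeeping needed to justify the setup. I would need to confirm that $X(\theta) = s(q_s(\theta)) - c(q_s(\theta), \theta)$ is strictly monotone in $\theta$ so that $X^{-1}$ exists and $H$ is a legitimate CDF; this follows from the single-crossing conditions $c_\theta, c_{q\theta} > 0$ combined with the envelope theorem applied to $u_0(\theta)$. I would also want to verify that the positive, continuously differentiable density $f$ on $[\underline{\theta}, \overline{\theta}]$ forces $H(x) \in (0,1)$ on a subinterval of $(0, u_0(\theta))$ of positive Lebesgue measure, which promotes the term-by-term weak inequality to a strict one and makes $\pi$ genuinely decreasing, rather than merely non-increasing, in $N$.
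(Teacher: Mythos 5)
Your proof is correct, and it reaches the conclusion by a somewhat different route than the paper. The paper's argument is a one-line differentiation of the profit with respect to $N$ holding the bid fixed, $\frac{\partial \pi}{\partial N} = \pi(q,p|\theta)\ln H(X(\theta)) \le 0$, which follows because the only $N$-dependence of $\pi=(p-c)g(u)$ sits in the powers $[H]^{N-i}$ inside $g$ and $H\le 1$. You instead first reduce the \emph{equilibrium} profit to the closed form $\int_0^{u_0}g(x)\,dx$ (a correct consequence of the cancellation $\pi=(u_0-b(u_0))g(u_0)$ in Theorem~1's proof), note that the upper limit $u_0(\theta)=s(q_s(\theta))-c(q_s(\theta),\theta)$ and the distribution $H$ are both independent of $N$, and then argue term-by-term that $g(x)=\sum_{i=1}^{K}[1-H(x)]^{i-1}[H(x)]^{N-i}$ is pointwise non-increasing in $N$. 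The underlying monotonicity fact is the same in both proofs, but your version buys two things the paper glosses over: it handles $N$ as a discrete parameter rather than differentiating in it, and it explicitly disposes of the worry that the equilibrium strategy itself shifts with $N$ (the paper's partial derivative silently holds $(q,p)$ fixed, which strictly speaking only bounds the fixed-strategy profit and needs an extra envelope-type remark to transfer to the equilibrium profit). Your closing care about strictness --- requiring $H(x)\in(0,1)$ on a positive-measure subset so the decrease is genuine --- matches the paper's caveat that the claim holds for $\theta\ne\overline{\theta}$, and your requirement that $X(\theta)$ be strictly monotone so that $H$ is a legitimate CDF is a hypothesis the paper also uses implicitly. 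No gaps.
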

\begin{proof}
From the proof of Theorem 1, we can easily have 
\begin{eqnarray*}
\frac{\partial \pi}{\partial N} = \pi(q,p|\theta) ln H(X(\theta)) \le 0.
\end{eqnarray*}
%\begin{eqnarray*}
%\pi (q, p | \theta)= (X(\theta)-b(X(\theta))) Pr\{win | X(\theta)\}\\
%Pr\{win | X(\theta)\} =  \sum_{i=1}^M [1-H(X(\theta))]^{i-1}[H(X(\theta))]^{N-i} 
%\end{eqnarray*}
Since CDF $H(\cdot) \le 1$, we can have that the profit function $\pi(\cdot)$ is a decreasing function of $N$, when $\theta \ne \overline{\theta}$.
\end{proof} 

Theorem 2 conforms to the fact that when more edge nodes join in the game, the competition becomes more severe, and the profit for each node is correspondingly reduced. Hence, the increase of $N$ will benefit the aggregator, which is the reason why the incentive mechanism is significant. Next, we will demonstrate that the profit of the participator is increased by the design of multiple winners as well.

\begin{myConclusion} 
In a game with $N$ nodes, the expected profit of each edge node $\pi (q,p | \theta)$ is an increasing function of winner number $K$. 
\end{myConclusion}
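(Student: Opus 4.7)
The plan is to reduce the equilibrium expected profit to a form that only depends on $K$ through the winning-probability function $g$, and then observe that $g$ is pointwise non-decreasing in $K$. Concretely, substituting the closed form $b(u)=u-\int_{0}^{u}\bigl(g(x)/g(u)\bigr)\,dx$ from the proof of Theorem 1 into $\pi(q,p\mid\theta)=(u_0-b(u))g(u)$ and evaluating at the equilibrium $u=u_0$ yields the very clean identity
\begin{equation*}
\pi(q_s,p_s\mid\theta)\;=\;\int_{0}^{u_0} g(x)\,dx.
\end{equation*}
I would make this reduction the first step, because it strips away all of the intertwining between the bid $b$ and the distribution $H$, leaving only $g$ and $u_0$ to track.

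Next I would invoke Che's Theorem 1 to observe that $u_0=s(q_s(\theta))-c(q_s(\theta),\theta)$ is independent of $K$: the equilibrium quality is picked to maximize $s(q)-c(q,\theta)$, a problem that has no $K$ in it. Hence the only channel through which $K$ influences $\pi$ is the factor $g$ inside the integral, and so monotonicity of $\pi$ in $K$ reduces to monotonicity of $g(\cdot;K)$ in $K$.

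For the monotonicity of $g$, I would compare $K$ and $K+1$ term by term. Writing
\begin{equation*}
g_{K+1}(u)-g_{K}(u)\;=\;[1-H(u)]^{K}\,[H(u)]^{N-K-1},
\end{equation*}
which is the single extra summand appearing when the upper index is incremented, gives a quantity that is non-negative for every $u$ since $H(u)\in[0,1]$. Integrating pointwise over $[0,u_0]$ then gives $\pi_{K+1}(\theta)\ge \pi_{K}(\theta)$. Strictness for $\theta\in(\underline\theta,\overline\theta)$ follows from the assumption that $f$ is positive and continuous on $[\underline\theta,\overline\theta]$, which forces $H(u)\in(0,1)$ on a set of positive Lebesgue measure inside $(0,u_0)$, making the inequality strict after integration.

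I do not expect any real obstacles here; the only thing that requires care is the very first step, namely justifying that at equilibrium $(u_0-b(u_0))g(u_0)$ really does collapse to $\int_{0}^{u_0} g(x)\,dx$ and that $q_s$ (hence $u_0$) is genuinely $K$-independent. Once that algebraic simplification and the appeal to Che's Theorem 1 are on the table, the rest is just the observation that adding another winning slot can only add a non-negative mass to the win-probability of every type $\theta$, and thus can only raise its expected profit.
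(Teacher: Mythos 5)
Your proof is correct, and it takes a genuinely different route from the paper's. The paper argues by deviation: it evaluates the $K$-winner equilibrium strategy $(q_{[K]}^{ne},p_{[K]}^{ne})$ inside the $(K+1)$-winner game, observes that its profit rises by exactly $(p_{[K]}^{ne}-c(q_{[K]}^{ne},\theta))\,[1-H(X(\theta))]^{K}[H(X(\theta))]^{N-K-1}>0$ because the win probability gains one extra summand, and then invokes the best-response property of the $(K+1)$-winner equilibrium to conclude $\pi_{[K+1]}(q_{[K+1]}^{ne},p_{[K+1]}^{ne}\mid\theta)\ge\pi_{[K+1]}(q_{[K]}^{ne},p_{[K]}^{ne}\mid\theta)$. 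You instead collapse the equilibrium profit to the closed form $\pi_{K}(\theta)=\int_{0}^{u_0}g_{K}(x)\,dx$ (which is a correct consequence of substituting $b(u_0)$ back into $(u_0-b(u_0))g(u_0)$), note that $u_0$ is $K$-free by Che's Theorem 1, and compare $g_{K}$ and $g_{K+1}$ pointwise. Both arguments ultimately rest on the same extra-summand computation for $g_{K+1}-g_{K}$; the difference is how the endogenous change of the equilibrium bid is handled. The paper's revealed-preference step is more robust, since it uses only the defining property of equilibrium and not the explicit ODE solution (so it survives even if one doubts the boundary condition $b(0)=0$ or uniqueness of the solution); your integral identity buys an exact quantitative expression for the profit gain, $\int_{0}^{u_0}[1-H(x)]^{K}[H(x)]^{N-K-1}\,dx$, delivers Theorem 2 (monotonicity in $N$) from the same formula for free, and lets you address strictness more carefully than the paper does.
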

\begin{proof}
The equilibrium strategies for $K$th and $(K+1)$th winner are separately denoted as $(p_{[K]}^{ne},q_{[K]}^{ne})$ and $(p_{[K+1]}^{ne},q_{[K+1]}^{ne})$. We also use $\pi_{[K+1]} (\cdot)$ to denote the profit in the game having $(K+1)$ winners. Then, we have 
\begin{eqnarray*}
&&\pi_{[K+1]} (q_{[K]}^{ne}, p_{[K]}^{ne} | \theta) - \pi_{[K]} (q_{[K]}^{ne}, p_{[K]}^{ne} | \theta) \\
&&= (p_{[K]}^{ne} - c(q_{[K]}^{ne},  \theta)) (1-H(X(\theta)))^K (H(X(\theta)))^{N-K-1}
\end{eqnarray*}
It can be seen that
\begin{equation*}
\pi_{[K+1]} (q_{[K]}^{ne}, p_{[K]}^{ne} | \theta) - \pi_{[K]} (q_{[K]}^{ne}, p_{[K]}^{ne} | \theta) >0.
\end{equation*}
Since the strategy $(p_{[K]}^{ne},q_{[K]}^{ne})$ might not be the Nash equilibrium strategy for the game with $(K+1)$ winners, we have 
\begin{equation*}
\pi_{[K+1]} (q_{[K+1]}^{ne}, p_{[K+1]}^{ne} | \theta) \ge \pi_{[K+1]} (q_{[K]}^{ne}, p_{[K]}^{ne} | \theta)
\end{equation*}
Thus, we can get that $\pi_{[K+1]} (\cdot) \ge \pi_{[K]} (\cdot)$ and $\pi (\cdot)$ is an increasing  function of $K$.
\end{proof}

\begin{myPosit} 
Suppose that all the participators have the same private value $\theta$ and we must select $K$ winners from $N$ nodes, then adding probability $\psi$ to each node will not impact its winning probability.
\end{myPosit}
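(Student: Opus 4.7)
The plan is to argue by symmetry. The hypothesis that all participators share a common private value $\theta$ together with Theorem~1 (or Che's Theorem~1 for the quality component) forces every node to adopt the same Nash equilibrium bid $(q_s(\theta), p_s(\theta))$, so the scores $S(q_s(\theta),p_s(\theta))$ are identical across the $N$ nodes. First I would remark that this makes the ``sort by score'' step vacuous: the only consistent tie-breaking rule is a uniformly random permutation of the $N$ nodes, and in the limit where we compare $\psi$-FMore against FMore the two procedures start from the same uniform ordering.

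Next I would describe the $\psi$-selection as a stochastic process on this uniform random permutation: at each position, an independent $\mathrm{Bernoulli}(\psi)$ coin is flipped, and successes are added to $\mathbb{W}$ until $K$ winners have been collected (the hypothesis that we ``must select $K$ winners'' corresponds to the regime $\Pr(\psi)\approx 1$ that the paper invokes, under which the process fills $\mathbb{W}$ almost surely). The joint distribution of the random permutation together with the coin flips is manifestly invariant under any relabeling of the nodes, so the marginal probability that node $i$ lands in $\mathbb{W}$ does not depend on $i$.

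Denoting this common probability by $p_{\psi}$ and summing the indicator variables over all nodes, $N p_{\psi} = E[|\mathbb{W}|] = K$, hence $p_{\psi}=K/N$. Applying the same symmetry argument to plain FMore (which is the case $\psi=1$) yields $p_{1}=K/N$, so the two probabilities coincide and the extra Bernoulli filter does not alter any individual node's winning probability.

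The step I expect to be the main subtlety is the equality $E[|\mathbb{W}|]=K$. For small $\psi$ the process can terminate with fewer than $K$ winners, so strictly one must either lean on the paper's working assumption that $\Pr(\psi)$ is close to $1$, or condition the symmetry argument on the event $\{|\mathbb{W}|=K\}$ and observe that conditioning preserves the exchangeability of the $N$ nodes. Either way, the symmetric relabeling at the heart of the argument is the essential content, and the $\psi$-dependence cancels out of the per-node winning probability.
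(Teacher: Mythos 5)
Your proof is correct and follows essentially the same route as the paper: identical private values force identical equilibrium scores (via Che's Theorem 1 / Theorem 1), so by symmetry each node wins with probability $K/N$ independently of $\psi$. Your explicit handling of the $E[|\mathbb{W}|]=K$ subtlety (conditioning on the event that $K$ winners are actually collected) is a careful refinement of a step the paper simply asserts via the ``must select $K$'' hypothesis, but the underlying exchangeability argument is the same.
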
 

It is an ideal model assumption in Proposition 2, the proof of which is given in Appendix A. In realistic scenarios, the private value $\theta$ is not identical for most nodes. For the node selected by FMore with high probability, $\psi$-FMore will negatively impact its winning probability. On the contrary, the winning probability of a low-score node will be improved by our $\psi$-FMore. More nodes are involved by $\psi$-FMore, and the critical parameter $\psi$ should be carefully chosen. In sum,  $\psi$-FMore improves the performance of federated learning due to the increased data diversity in extreme cases. 

\begin{myPosit} 
In a game with multi-dimensional resources and $K$ winners, the choice of $(q_1, \cdots, q_m)$ is independent with $p$. The quality can be separately computed according to Che's Theorem 1. 
\end{myPosit}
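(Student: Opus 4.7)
The plan is to exploit the quasi-linear form $S(\mathbf{q},p)=s(\mathbf{q})-p$ together with the fact that FMore's winner-determination rule depends on each bid only through its composite score. Once that is recognized, the $(q_1,\ldots,q_m)$-choice and the $p$-choice can be decoupled by reparametrizing the bid in terms of the submitted score, and the multi-dimensional problem collapses into a scalar problem already handled by Che's Theorem 1 and by Theorem 1 of this paper.

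First, I would write the expected profit of a node with private type $\theta$ as
$$\pi(\mathbf{q},p\mid\theta)=\bigl(p-c(\mathbf{q},\theta)\bigr)\,\Pr\{\mathrm{win}\mid S(\mathbf{q},p)\},$$
and note that, because the aggregator ranks bids by their scores and selects the top $K$, the winning probability in any symmetric equilibrium depends on this node's bid only through the scalar $u=s(\mathbf{q})-p$. Changing variables from $(\mathbf{q},p)$ to $(\mathbf{q},u)$ via $p=s(\mathbf{q})-u$ rewrites the profit as
$$\pi=\bigl(s(\mathbf{q})-c(\mathbf{q},\theta)-u\bigr)\,g(u),$$
where $g(u)$ is the same winning-probability function as in Theorem 1. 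Crucially, the dependence on $\mathbf{q}$ is now confined to the factor $s(\mathbf{q})-c(\mathbf{q},\theta)$, which is multiplied by the nonnegative quantity $g(u)$ and does not interact with either $u$ or $p$.

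Second, I would conclude from this factorization that, for any fixed target score $u$, maximizing $\pi$ over the quality vector is equivalent to maximizing $s(\mathbf{q})-c(\mathbf{q},\theta)$. The solution
$$\mathbf{q}_s(\theta)=\arg\max_{\mathbf{q}}\,s(\mathbf{q})-c(\mathbf{q},\theta)$$
is therefore a function of $\theta$ alone, independent of $p$ and of $u$. This is exactly the multi-dimensional analogue of Che's Theorem 1. Substituting $\mathbf{q}_s(\theta)$ back reduces the original multi-dimensional optimization to the one-dimensional equilibrium problem over $u$ (equivalently over $p$), whose solution is already given by Theorem 1 of this paper.

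The main obstacle, as in Che's original argument, is justifying that $\Pr\{\mathrm{win}\mid S(\mathbf{q},p)\}$ really does factor through the composite score. This requires arguing that in a symmetric Nash equilibrium the opponents' scores are determined by their types via a common strategy, so that the joint distribution of the relevant order statistics among the other $N-1$ bidders depends on this node's submission only through $u$. Once this symmetry assumption, which is inherited from the setting of Che's Theorem 2, is granted, the single-crossing conditions on $c(\cdot)$ assumed earlier guarantee that $\mathbf{q}_s(\theta)$ is well defined and the decoupling argument goes through without further work.
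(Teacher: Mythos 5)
Your proposal is correct and is essentially the paper's own argument in different clothing: the paper constructs the score-preserving deviation $(\mathbf{q}_s,\hat{p})$ with $\hat{p}=p+s(\mathbf{q}_s)-s(\mathbf{q})$ and notes it weakly dominates, which is exactly your change of variables to $(\mathbf{q},u)$ with $u$ held fixed while $\mathbf{q}$ is moved to $\arg\max\,\bigl(s(\mathbf{q})-c(\mathbf{q},\theta)\bigr)$. Both hinge on the same two facts — the winning probability depends on the bid only through the quasi-linear score, and for a fixed score the profit reduces to $s(\mathbf{q})-c(\mathbf{q},\theta)-u$ — so no further comment is needed.
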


For multi-dimensional resources, the quality combination is computed by maximizing $s(q_1, \cdots, q_m) - c(q_1, \cdots, q_m, \theta)$, and the result contains a set of quality combinations. The proof of Proposition 3 is presented in Appendix B. For the aggregator, it can set the weight of  $q_i$ in the function $s(\cdot)$ to get what it needs. In the following proposition, we will provide guidance to the aggregator to get the expected resources in an efficient market. 

\begin{myPosit} 
When we consider the general Cobb-Douglas utility function $s(\cdot) = \prod_{i=1}^m q_i^{\alpha_i}$ and the additive cost function $c(\cdot)= \theta( \sum_{i=1}^m\beta_i q_i)$, where $\sum_{i=1}^m \alpha_i = 1$ and $\sum_{i=1}^m \beta_i = 1$, the aggregator can adjust the parameters $(\alpha_1, \cdots, \alpha_m)$ to get different proportion of resources. That is
\begin{equation*}
\frac{q_i^*}{q_j^*} = \frac{\alpha_i}{\alpha_j} \cdot \frac{\tilde{\beta}_j}{\tilde{\beta}_i},
\end{equation*}
where $\tilde{\beta}_i$ is the estimation of cost coefficient for $q_i$ according to the historical data in the public and efficient market. 
\end{myPosit}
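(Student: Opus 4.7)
The plan is to reduce the problem to the per-node quality choice characterised by Che's Theorem~1, then to read off the ratios from the first-order conditions for the specific Cobb-Douglas/additive pair, and finally to reinterpret the unknown cost coefficients through the market-estimation hypothesis.

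First, by Proposition~3 the multi-dimensional quality vector can be chosen independently of the payment, and by Che's Theorem~1 the equilibrium quality vector of any node is a maximiser of
\begin{equation*}
\Phi(q_1,\dots,q_m) \;=\; s(q_1,\dots,q_m) - c(q_1,\dots,q_m,\theta) \;=\; \prod_{k=1}^{m} q_k^{\alpha_k} \;-\; \theta\sum_{k=1}^{m}\beta_k q_k.
\end{equation*}
Differentiating coordinatewise and setting $s^{*}=\prod_{k}(q_k^{*})^{\alpha_k}$, the stationarity condition $\partial\Phi/\partial q_i=0$ becomes
\begin{equation*}
\frac{\alpha_i\, s^{*}}{q_i^{*}} \;=\; \theta\beta_i,\qquad i=1,\dots,m.
\end{equation*}

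Second, dividing the $i$-th relation by the $j$-th makes the common factors $s^{*}$ and $\theta$ cancel, leaving $q_i^{*}/q_j^{*}=(\alpha_i/\alpha_j)(\beta_j/\beta_i)$. No appeal to uniqueness of the maximiser is required: since $\sum_k\alpha_k=1$ the Cobb-Douglas $s$ has constant returns to scale and $\Phi$ is homogeneous of degree one, so the set of interior stationary points is in fact a ray from the origin; what survives along that ray, and is the only invariant of the problem, are exactly the pairwise ratios.

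Third, I address the estimation step. The true $\beta_i$ are ingredients of each node's private cost function and are not observable by the aggregator; however, under the stated hypothesis of a public and efficient market, the aggregator can form estimates $\tilde\beta_i$ of $\beta_i$ from the historical bid record, for instance by regressing observed bids on reported quality vectors. Substituting $\tilde\beta_i$ for $\beta_i$ yields the advertised formula, and strict monotonicity of $q_i^{*}/q_j^{*}$ in $\alpha_i/\alpha_j$ then shows that the aggregator can steer the resource proportions to any desired ratio by a suitable choice of scoring weights.

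The main obstacle is not the calculus, which is a one-line Lagrangian computation, but rather this last step: justifying the substitution $\beta_i\mapsto\tilde\beta_i$. A fully rigorous treatment would have to specify an estimator and prove its consistency, which the phrase ``public and efficient market'' is clearly designed to abstract away. In the actual write-up I would therefore state the identity as an exact equality under the idealisation $\tilde\beta_i=\beta_i$ and then note that asymptotic consistency of $\tilde\beta_i$ delivers the corresponding asymptotic statement for the practically tuned scheme.
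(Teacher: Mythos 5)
Your route is genuinely different from the paper's, and it has a real gap. The paper does not touch the node-side problem at all: it poses the \emph{aggregator's} constrained utility-maximization problem, $\max \prod_i q_i^{\alpha_i}$ subject to the budget constraint $\theta\sum_i\tilde\beta_i q_i = c_0$, and reads the ratio off the Lagrange conditions $\alpha_i s^*/q_i^* = \lambda\theta\tilde\beta_i$, with the multiplier $\lambda$ cancelling in the division. The estimates $\tilde\beta_i$ enter directly as the coefficients of the aggregator's planning constraint, so no substitution or consistency argument is needed. You instead invoke Che's Theorem~1 and Proposition~3 to work with the node's \emph{unconstrained} surplus $\Phi(q)=\prod_k q_k^{\alpha_k}-\theta\sum_k\beta_k q_k$, and this is where the trouble lies. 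With $\sum_k\alpha_k=1$, $\Phi$ is homogeneous of degree one, so by Euler's identity any stationary point satisfies $\Phi(q)=0$, and more importantly the first-order system $\alpha_i s(q)/q_i=\theta\beta_i$ is overdetermined: the ratios fix the direction of $q$, and the remaining equation reduces to the parameter condition $\prod_k(\alpha_k/\beta_k)^{\alpha_k}=\theta$, which involves no free variable. Generically this fails, the stationary set is empty rather than ``a ray from the origin,'' and the supremum of $\Phi$ is either $0$ (attained only at $q=0$) or $+\infty$. So the quantities $q_i^*$ whose ratios you compute in your second step do not in general exist, and the claim that ``the pairwise ratios survive'' is not supported. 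The budget-constrained formulation is precisely what repairs this: the constraint compactifies the feasible set and the multiplier $\lambda$ absorbs the missing degree of freedom.

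Your third step is also doing work the paper never asks for. Since the paper's optimization is the aggregator's own problem written directly in terms of $\tilde\beta_i$, there is no true-$\beta$-to-estimated-$\beta$ substitution to justify, and the estimator-consistency discussion is extraneous. If you want to keep the node-side interpretation, you would need to drop the assumption $\sum_k\alpha_k=1$ on the node's objective (so that $s-c$ is strictly concave with an interior maximizer, as the paper's curvature assumptions on $U$ elsewhere suggest) or otherwise re-pose the problem with a scale-fixing constraint; as written, the calculus in your second step is applied to a maximizer that need not exist.
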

  
Proposition 4 can be proved with the expected utility theory that the general Cobb-Douglas utility function is maximized with the cost constraint. The proof is given in Appendix C. In this way, the aggregator is able to get the expected resources from a macro view.  
\begin{myConclusion} 
When the utility function $U(\cdot)$ of the aggregator is equal to $s(\cdot)$ and has the additive form, our proposed FMore is Pareto efficient. 
\end{myConclusion}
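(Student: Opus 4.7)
The plan is to reduce Pareto efficiency of FMore to the claim that its equilibrium allocation maximizes total social surplus, since the first-price payments $p_i$ are pure transfers between the aggregator and the selected nodes and leave the aggregate welfare invariant. Under the hypotheses $U = s$ and additive form $U(\mathbf{q}) = \alpha_1 q_1 + \cdots + \alpha_m q_m$, the total welfare generated by a winner set $\mathbb{W}$ with chosen qualities $\{\mathbf{q}_i\}$ is $\sum_{i \in \mathbb{W}} (U(\mathbf{q}_i) - c(\mathbf{q}_i,\theta_i))$, so it suffices to show that FMore's equilibrium allocation maximizes this sum jointly over size-$K$ subsets of bidders and over admissible quality profiles.

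For the quality component, I would invoke Che's Theorem 1, which states that each bidder's equilibrium quality is $q_s(\theta) = \arg\max\{s(q) - c(q,\theta)\}$. Because $U = s$ by assumption, this coincides exactly with the per-node welfare maximizer $\arg\max\{U(q) - c(q,\theta)\}$, so conditional on any winner set the equilibrium qualities are already first-best. It remains to show that FMore also picks the welfare-maximizing set of winners.

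For the selection component, I would define the per-node ex-post surplus $u_0(\theta) := s(q_s(\theta)) - c(q_s(\theta),\theta)$ and, via the envelope theorem, observe $u_0'(\theta) = -c_\theta(q_s(\theta),\theta) < 0$, so surplus is strictly decreasing in the cost type. Next I would appeal to the identity $b(u) = u - \int_0^u g(x)/g(u)\, dx$ derived inside the proof of Theorem 1 (which we may take as given) to argue that the equilibrium score $b(u_0)$ is a strictly increasing function of $u_0$. Composing these two monotonicities, the top-$K$ rule used by FMore selects exactly the $K$ bidders with the largest values of $u_0(\theta_i)$; by additivity of $U$, this is precisely the size-$K$ subset maximizing $\sum_{i \in \mathbb{W}}(U(q_s(\theta_i)) - c(q_s(\theta_i),\theta_i))$, i.e.\ the total social surplus.

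The main obstacle I expect is verifying strict monotonicity of $b$ cleanly on the full support. I would differentiate $b(u) = u - \int_0^u g(x)/g(u)\, dx$ to obtain $b'(u) = G(u)\, g'(u)/g(u)^2$ with $G(u) = \int_0^u g(x)\, dx$, and then use that $g(u) = \sum_{i=1}^K [1-H(u)]^{i-1}[H(u)]^{N-i}$ represents the winning probability of a node with surplus $u$, which is strictly increasing in $u$ on the interior of the support, to conclude $b'(u) > 0$. A secondary subtlety is aggregator individual rationality, but this is automatic since $b(u) \le u$ at equilibrium implies each selected score $s(q_s) - p_s \ge 0$, so $V \ge 0$ at the chosen allocation; hence the welfare-maximizing allocation is feasible for the aggregator as well, completing Pareto efficiency.
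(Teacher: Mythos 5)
Your proposal is correct and shares the paper's core idea, but it is genuinely more complete. The paper's proof consists of exactly two observations: with payments netting out as transfers and $U=s$ additive, the social surplus decomposes as $\sum_{i\in\mathbb{W}}\bigl(s(\mathbf{q}_i)-c(\mathbf{q}_i,\theta_i)\bigr)$, and each winner's equilibrium quality is $\arg\max\,\{s(q)-c(q,\theta)\}$ by Che's Theorem~1, so each summand is maximized --- and it stops there. That argument only establishes efficiency \emph{conditional on the winner set}; it silently skips the question of whether the top-$K$-by-score rule actually selects the surplus-maximizing size-$K$ subset, which is part of the allocation and hence part of what Pareto efficiency must cover. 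You supply exactly the missing step: the per-node first-best surplus $u_0(\theta)=\max_q\{s(q)-c(q,\theta)\}$ is decreasing in $\theta$ by the envelope theorem, the equilibrium score $b(u_0)=u_0-\int_0^{u_0}g(x)/g(u_0)\,dx$ satisfies $b'(u)=G(u)g'(u)/g(u)^2\ge 0$ with $g$ the (increasing) winning probability, so ranking by score coincides with ranking by $u_0$, and picking the $K$ largest terms of a sum of per-node surpluses maximizes the total. Your derivative computation for $b$ checks out, and the aggregator-IR aside ($0\le b(u)\le u$ since $g$ is nondecreasing) is harmless though not strictly needed for Pareto efficiency. In short, your route buys a rigorous treatment of the winner-selection stage that the paper's one-line conclusion glosses over; the only caveats are minor --- strictness of $g'>0$ on the interior of the support (needed only to break ties, which the paper resolves by coin flip anyway) and the implicit standing assumption $c_\theta>0$ inherited from Che's model.
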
 
\begin{proof}
Pareto efficiency is equivalent to that the social surplus is maximized. The social surplus $SS$ is given as  
\begin{eqnarray*}
SS&=&U(q_1, \cdots, q_m) -  \sum_{i \in \mathbb{W}} c(q_{i1}, \cdots, q_{im}, \theta_i) \\
&=& \sum_{i \in \mathbb{W}} \big( s(q_{i1}, \cdots, q_{im}) - c(q_{i1}, \cdots, q_{im}, \theta_i) \big)
\end{eqnarray*}
Since the quality of each winner is chosen as $(q_1, \cdots, q_m) =  arg \ max \ s(q_1, \cdots, q_m) - c(q_1, \cdots, q_m, \theta)$, we can directly arrive at the conclusion. 
\end{proof}

\begin{myConclusion} 
FMore is Incentive Compatible (IC).
\end{myConclusion}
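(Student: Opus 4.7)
The plan is to reduce incentive compatibility to the Nash-equilibrium characterization already established in Theorem 1, together with the quality-selection result of Che's Theorem 1. First I would fix the meaning of IC in our setting: when every other edge node plays its equilibrium bid, a node with true private cost parameter $\theta$ cannot strictly increase its expected profit
\begin{equation*}
\pi(\hat q,\hat p\mid\theta)=\bigl(\hat p-c(\hat q,\theta)\bigr)\,\Pr\{\text{win}\mid S(\hat q,\hat p)\}
\end{equation*}
by submitting any bid $(\hat q,\hat p)\neq(q_s(\theta),p_s(\theta))$, and in particular by declaring a quality $\hat q\neq q_s(\theta)$.

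Next I would decouple an arbitrary deviation into a quality component and a payment component using the quasi-linearity of the scoring rule $S(\hat q,\hat p)=s(\hat q)-\hat p$. Because the winning probability of any bid depends on $(\hat q,\hat p)$ only through the scalar score $\sigma=s(\hat q)-\hat p$, every alternative bid $(\hat q,\hat p)$ with $\hat q\neq q_s(\theta)$ admits a score-equivalent comparison bid $(q_s(\theta),\hat p')$ defined by $s(q_s(\theta))-\hat p'=\sigma$. The two bids have identical winning probabilities, so their expected profits differ only by the surplus gap $\bigl(s(q_s(\theta))-c(q_s(\theta),\theta)\bigr)-\bigl(s(\hat q)-c(\hat q,\theta)\bigr)$, which is strictly positive whenever $\hat q\neq q_s(\theta)$ by Che's Theorem 1. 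Hence any bid carrying a false quality is strictly dominated by one with the equilibrium quality, and every best response must set $\hat q=q_s(\theta)$.

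Once the quality is pinned, the only remaining freedom is the scalar payment $\hat p$, and this is exactly the one-dimensional subproblem that Theorem 1 resolves: the first-order linear differential equation derived there has $p_s(\theta)$ as its unique solution maximizing $(\hat p-c(q_s,\theta))\,g(u)$, so any $\hat p\neq p_s(\theta)$ strictly lowers $\pi(\cdot\mid\theta)$. Combining the two steps, no deviation from $(q_s(\theta),p_s(\theta))$ helps, which is precisely the IC property that declaring false qualities is useless. The main obstacle I expect is making the score-equivalence argument airtight in the multi-dimensional quality case: I need the single-crossing assumptions $c_{qq}\ge 0$, $c_{q\theta}>0$, $c_{qq\theta}\ge 0$ from the model setup to guarantee that the maximizer of $s(q)-c(q,\theta)$ is unique, so that the dominance conclusion is strict rather than only weak. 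Once that is cleanly stated, IC follows as an immediate corollary of Theorem 1 with no fresh computation required.
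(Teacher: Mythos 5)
Your proof is correct in substance but takes a genuinely different route from the paper's. The paper's own proof of this theorem is a one-step monotonicity argument aimed narrowly at \emph{false declarations}: a node that under-declares some quality component ($\hat q_j < q_j$) while keeping its payment $p$ and its actual cost $c(q_1,\dots,q_m,\theta)$ unchanged gets a strictly lower score because $s(\cdot)$ is increasing, hence a lower winning probability with no compensating gain; nothing further is argued. You instead reduce IC to the best-response property of the equilibrium bid: your score-equivalence substitution $(\hat q,\hat p)\mapsto(q_s(\theta),\hat p')$ with $\hat p'=\hat p+s(q_s(\theta))-s(\hat q)$ is exactly the device the paper uses in Appendix~B to prove Proposition~3, and the residual one-dimensional payment problem is Theorem~1. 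Your version is more complete --- it handles simultaneous deviations in quality and payment and both over- and under-declaration --- but be aware of a mismatch in what is being proved: you charge the deviator the cost $c(\hat q,\theta)$ of the quality it bids, whereas the paper's notion of ``declaring false qualities'' keeps the cost at $c(q,\theta)$ (the node still provides $q$ but reports $\hat q$); strictly speaking your argument establishes that the equilibrium bid is a best response, which is the content of Theorem~1, rather than the paper's narrower claim about misreporting. Your caution about uniqueness of $\arg\max\,\{s(q)-c(q,\theta)\}$ is well placed: without it the dominance in your decoupling step is only weak (which still suffices for IC but not for strictness), and the score-equivalence step also tacitly assumes the winning probability depends on the bid only through the score, which requires ties to occur with probability zero --- guaranteed here by the continuous density of $\theta$.
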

\begin{proof}
The payment $p$ is computed by maximizing the expected profit in Eq. (\ref{GoalPi}) with the combination of quality $(q_1, \cdots, q_m)$, and the corresponding score is $S_0 = s(q_1, \cdots, q_m) - p$. If a node declares the malicious quality $\{(\hat{q}_1, \cdots, \hat{q}_m) |\exists j, \hat{q}_j<q_j \}$ and payment $p$, we can have
\begin{eqnarray*}
&&s(q_1, \cdots, q_m) - c(q_1, \cdots, q_m, \theta) > \\
&&s(\hat{q}_1, \cdots, \hat{q}_m) - c(q_1, \cdots, q_m, \theta).
\end{eqnarray*}
We can find $S(q_1, \cdots, q_m)>S(\hat{q}_1, \cdots, \hat{q}_m)$, which indicates that the declared malicious quality will negatively impact the winning probability. Thus, FMore is IC. 
\end{proof}

\section{Performance Evaluation}
In this section, we demonstrate the performance of FMore via both simulations and real-world experiments. A smart simulator is developed to comprehensively analyze the performance with a large number of edge nodes, while we also present the performance improvement with dynamic multi-dimensional resources in a realistic scenario.

\begin{figure*}[!th]
\centering
\setcounter{figure}{2}
\subfigure{
\begin{minipage}[t]{.48\linewidth}
\centering
\includegraphics[scale = 0.27]{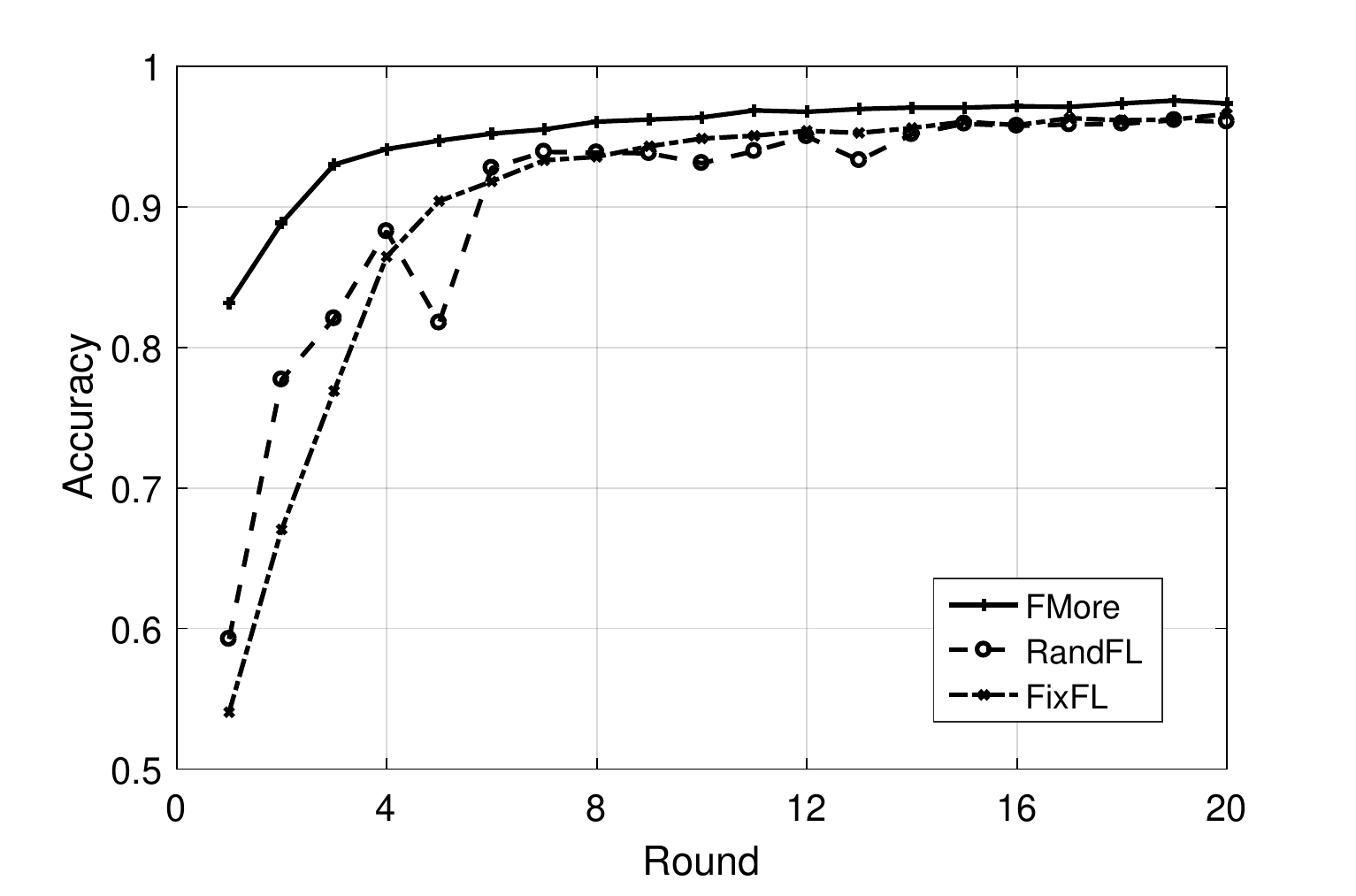}
\includegraphics[scale = 0.27]{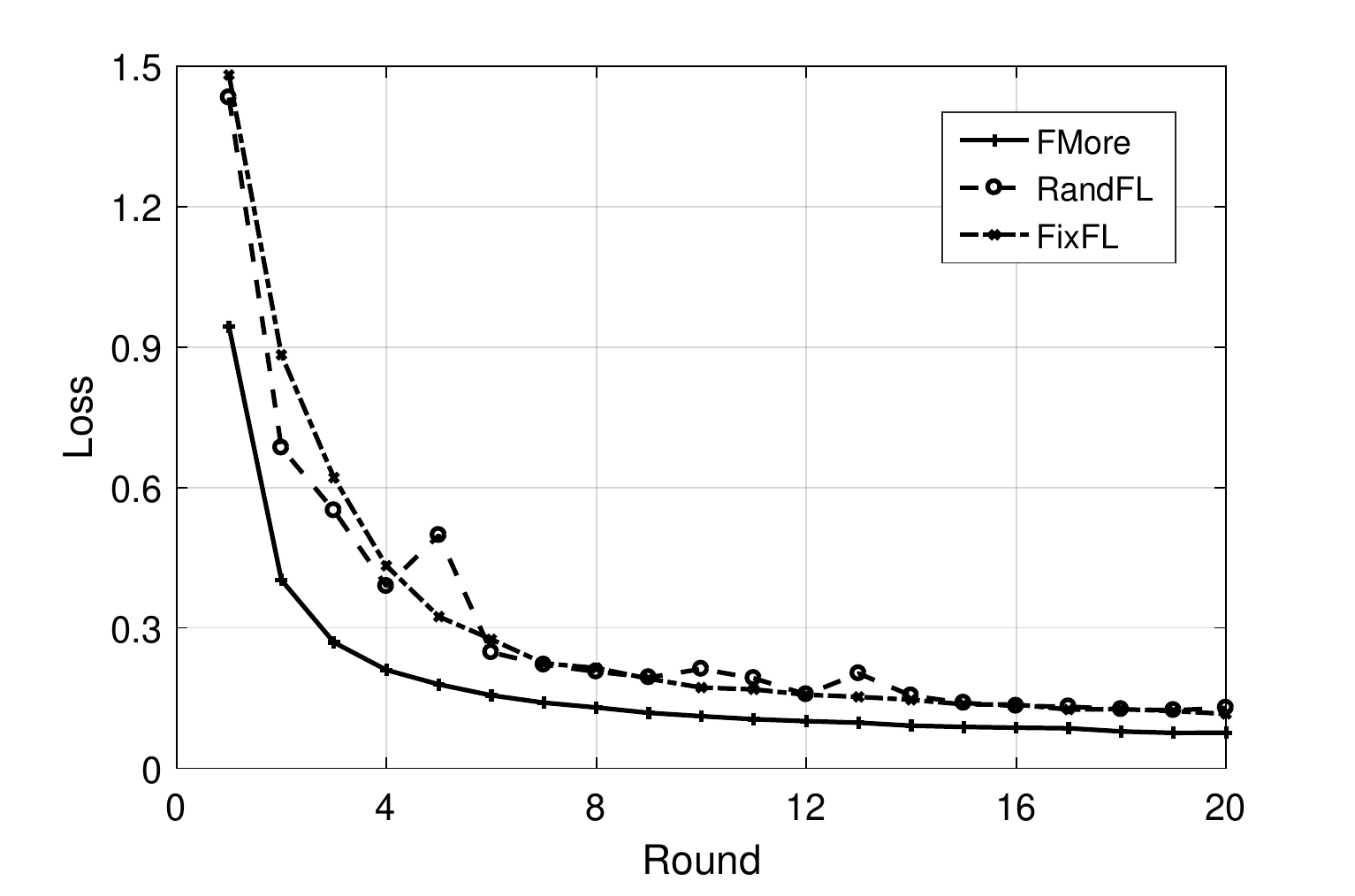}
\caption{The accuracy and loss for CNN with MNIST-O}
\label{CNNMNISTO}
\end{minipage}
}
\subfigure{
\begin{minipage}[t]{.48\linewidth}
\centering
\includegraphics[scale = 0.27]{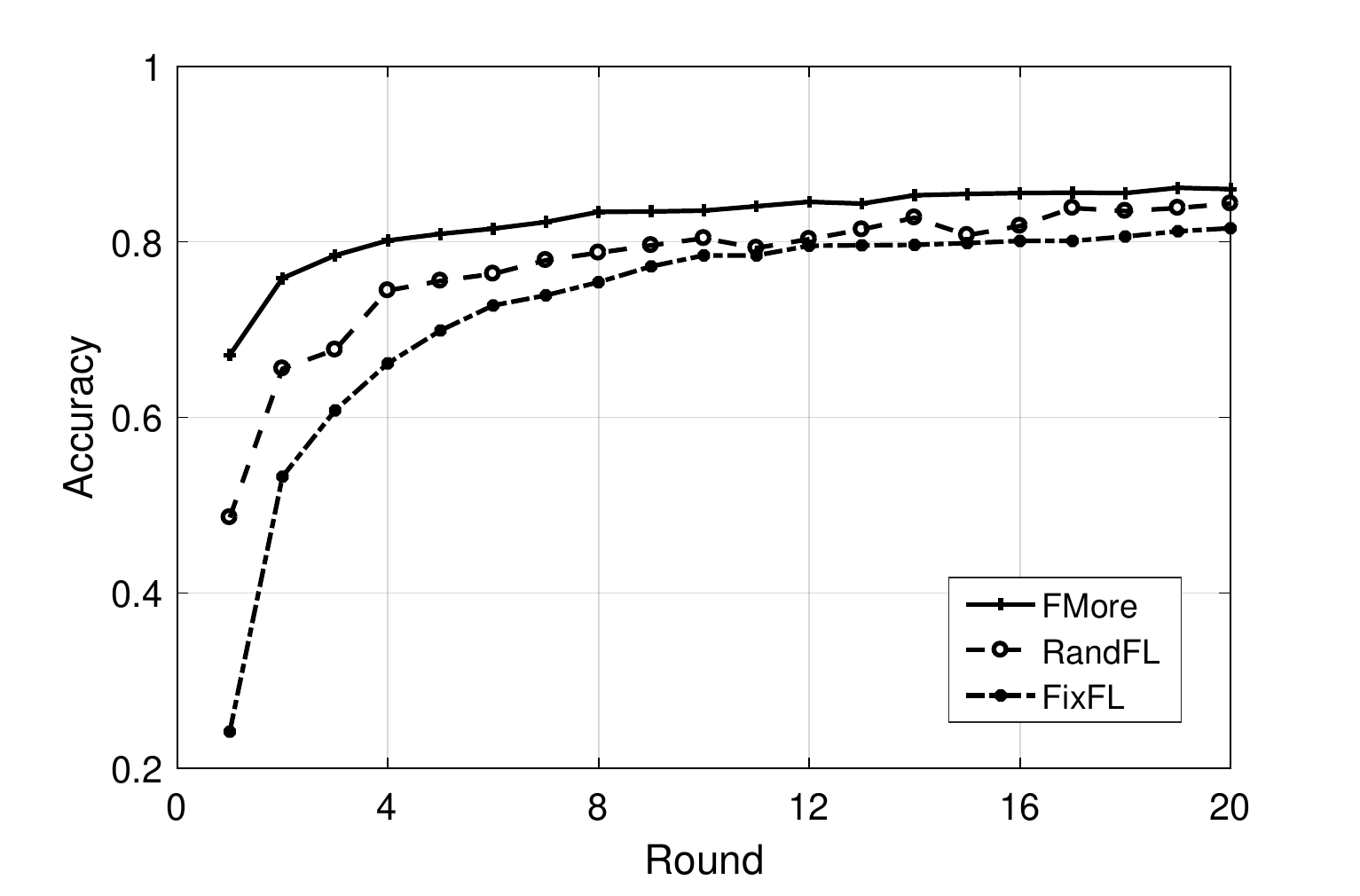}
\includegraphics[scale = 0.27]{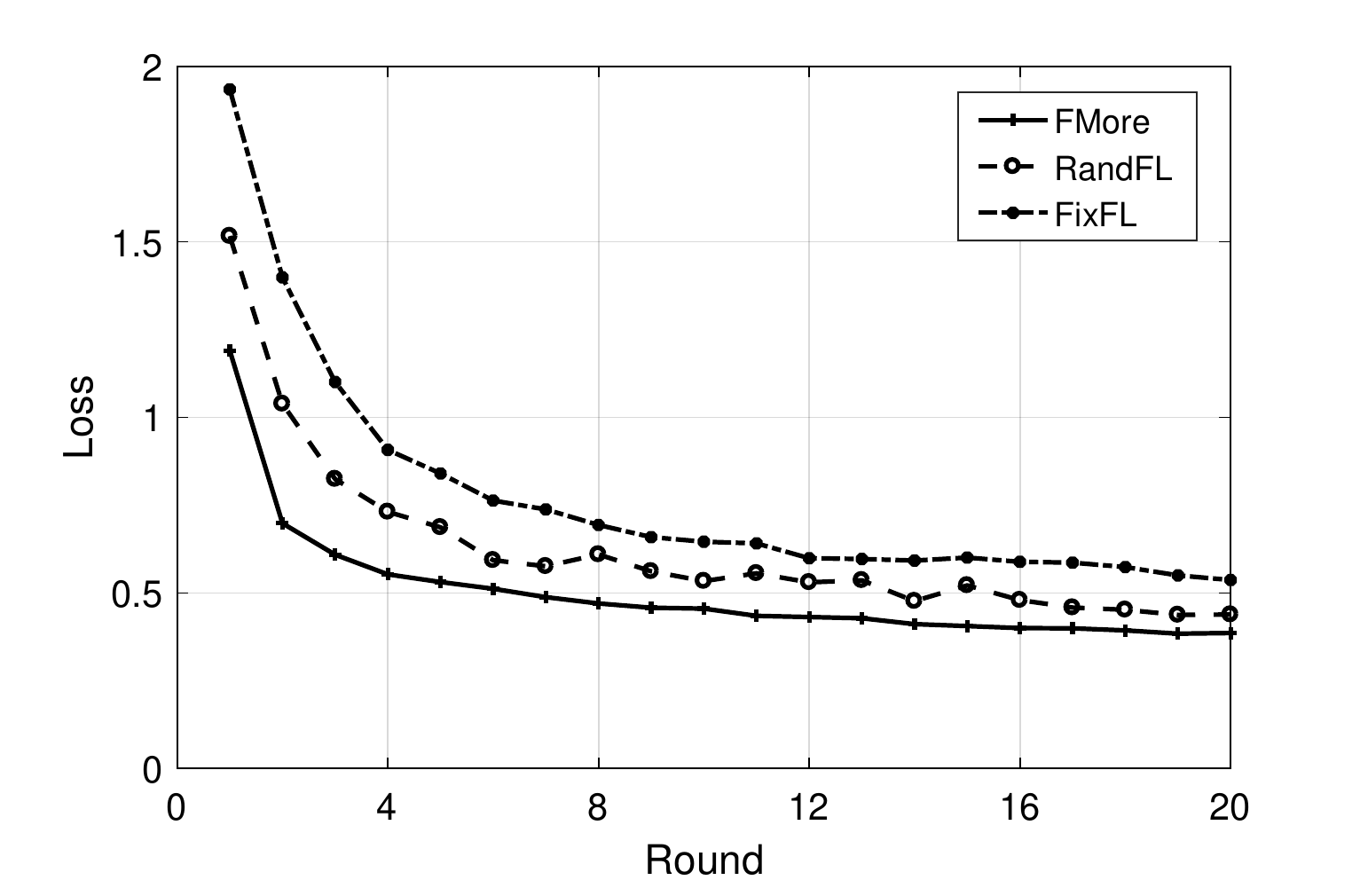}
\caption{The accuracy and loss for CNN with MNIST-F}
\label{CNNMNISTF}
\end{minipage}
}
\end{figure*}

\begin{figure*}[!ht]
\centering
\subfigure{
\begin{minipage}[t]{.48\linewidth}
\centering
\includegraphics[scale = 0.27]{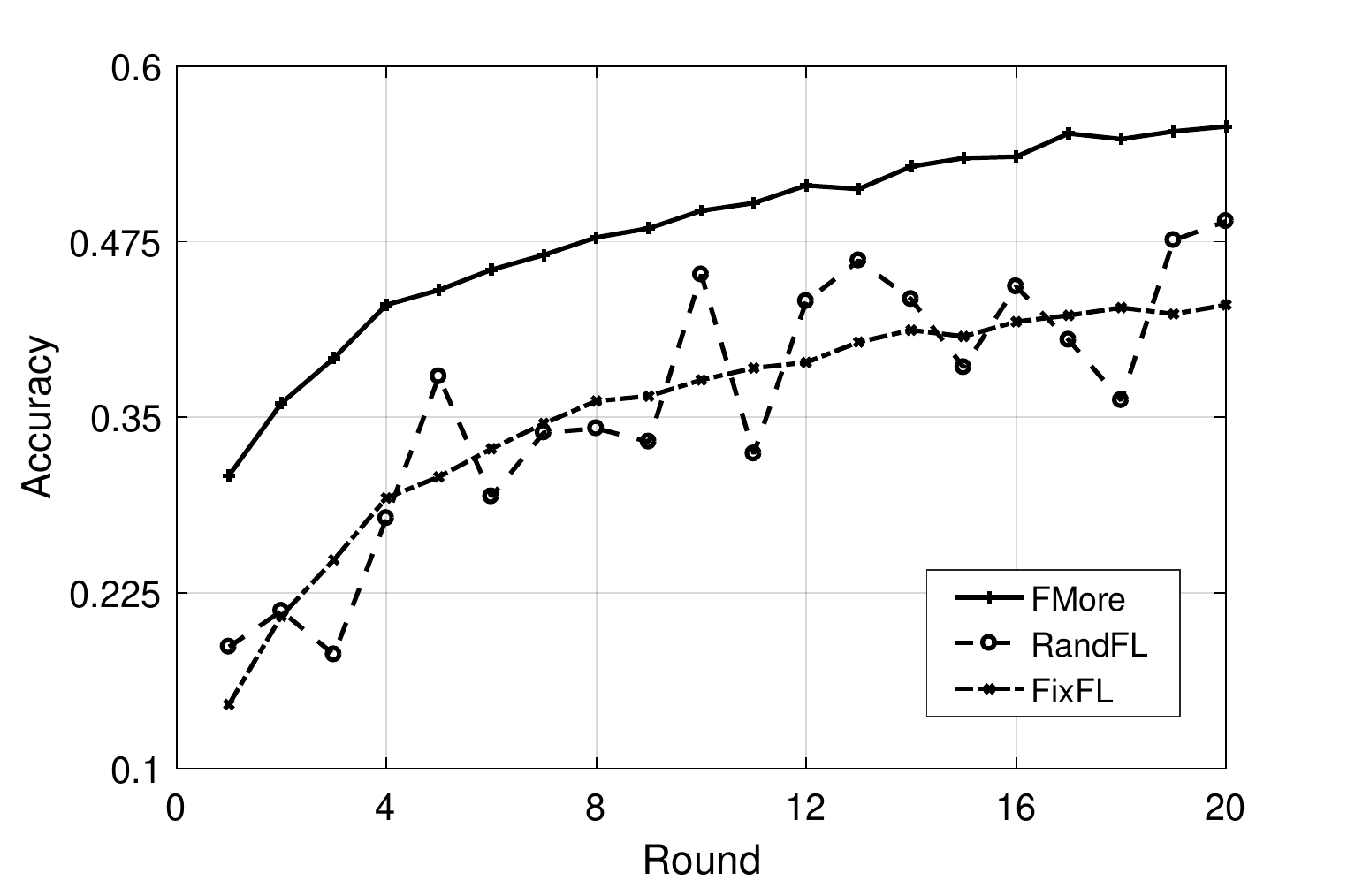}
\includegraphics[scale = 0.27]{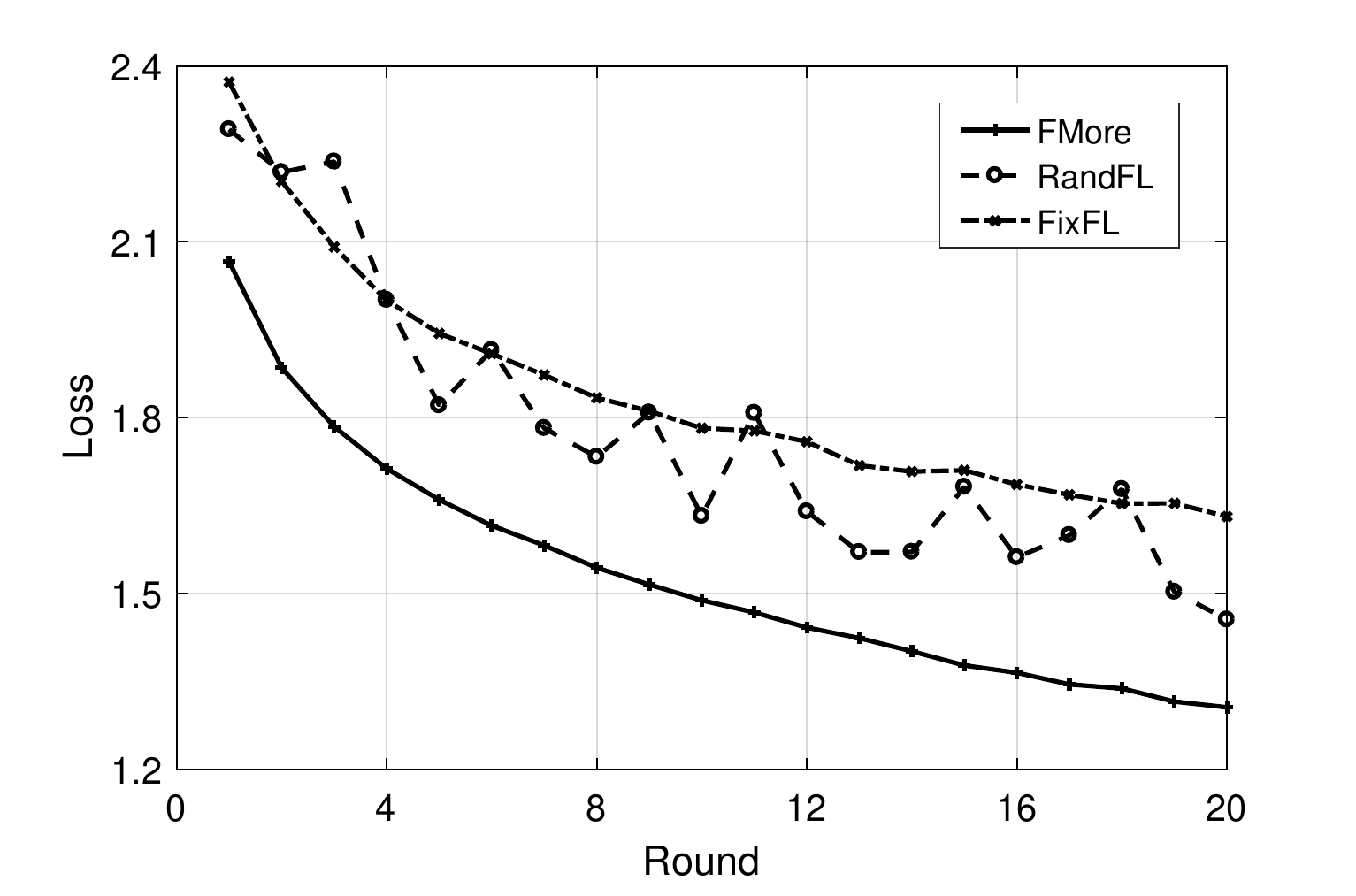}
\caption{The accuracy and loss for CNN with CIFAR-10}
\label{CNNCIF}
\end{minipage}
}
\subfigure{
\begin{minipage}[t]{.47\linewidth}
\centering
\includegraphics[scale = 0.27]{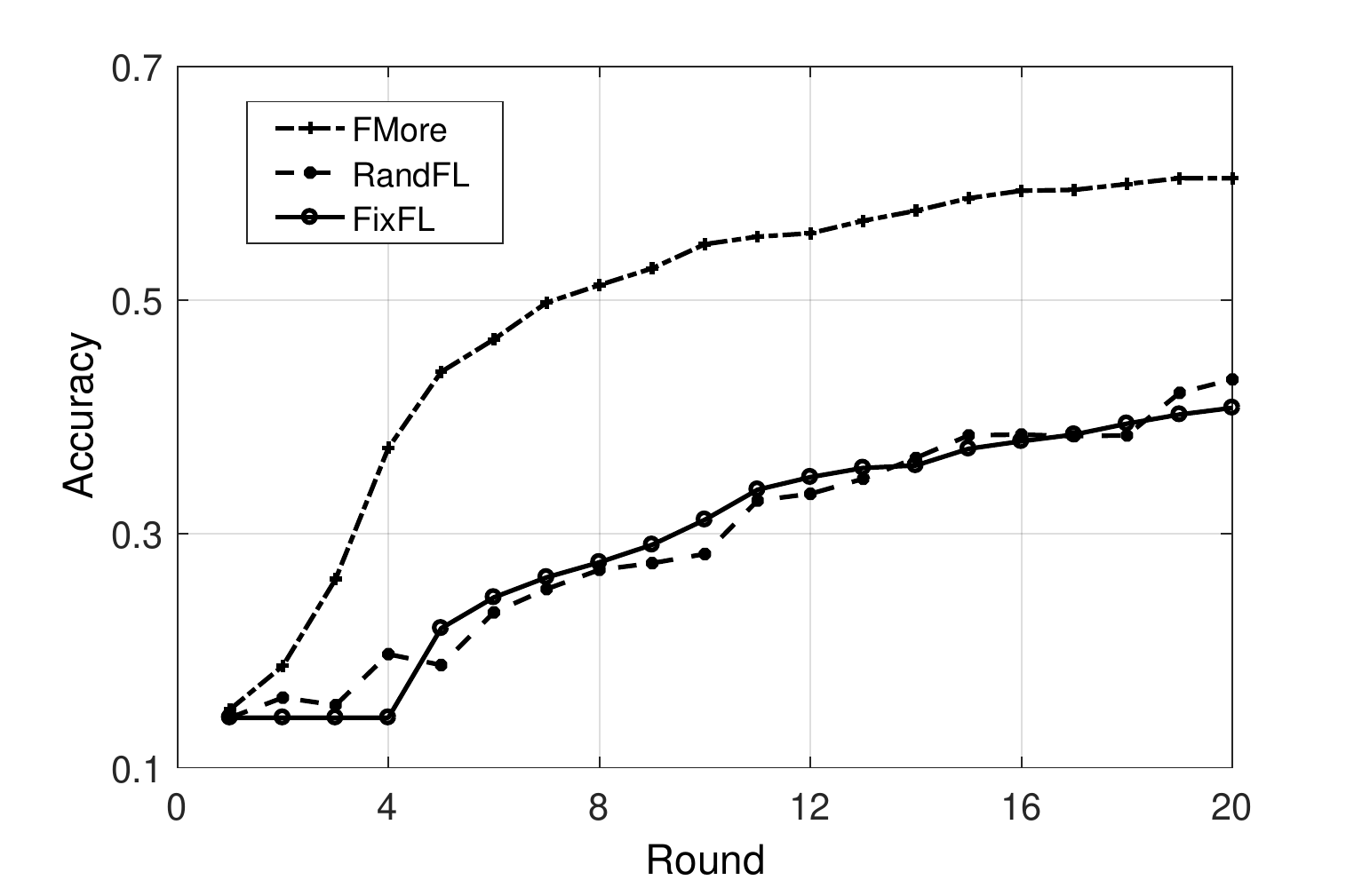}
\includegraphics[scale = 0.27]{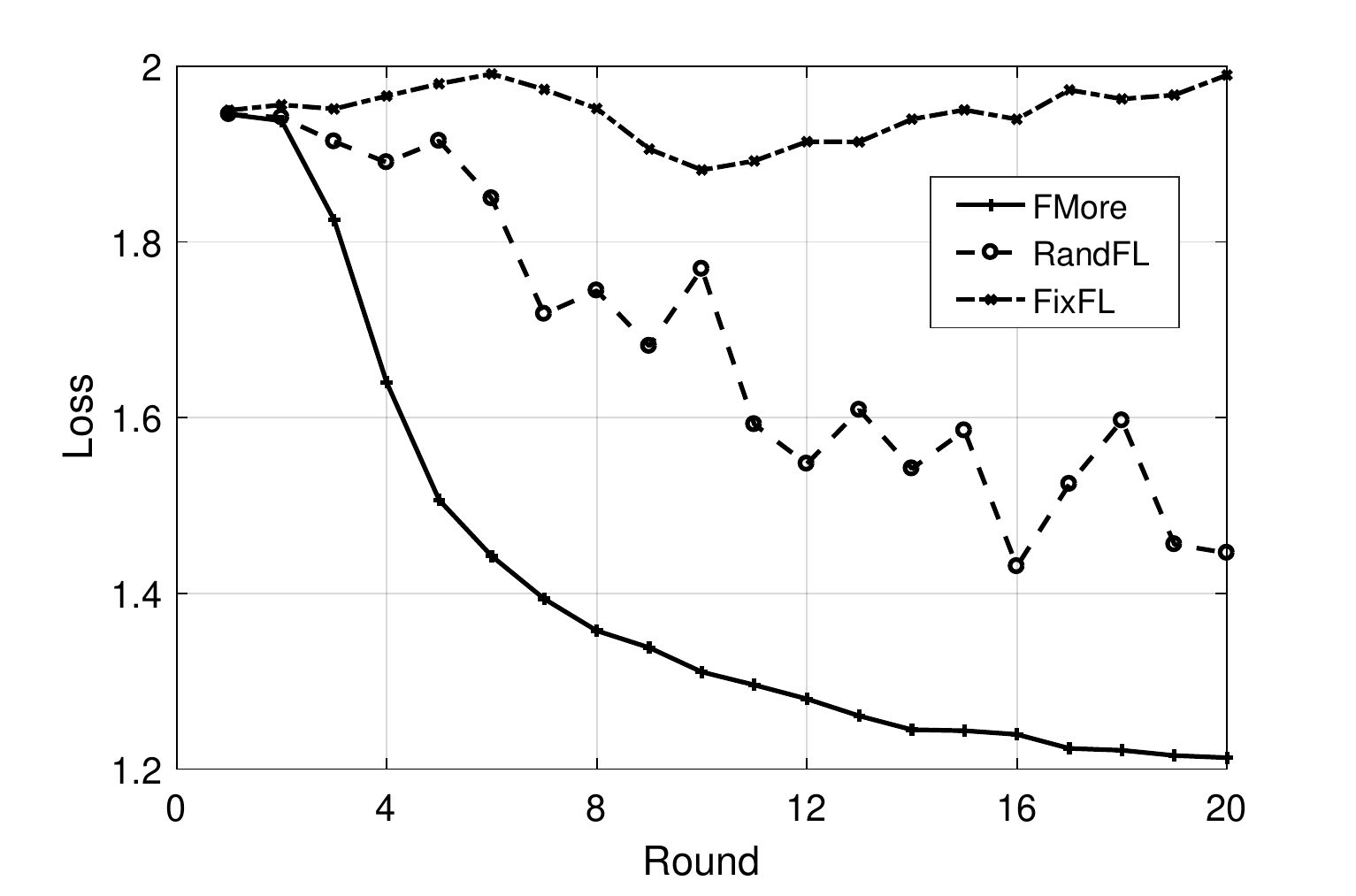}
\caption{The accuracy and loss for LSTM with HPNews}
\label{LSTMN}
\end{minipage}
}
\end{figure*}

\subsection{Setup}
We design a smart simulator based on Tensorflow to study the performance of FMore. In this simulator, we consider four classic datasets, i.e., MNIST (referred to as MNIST-O), Fashion MNIST (referred to as MNIST-F), CIFAR-10, and news category dataset (referred to as HPNews). The former three are collections of pictures. For instance, CIFAR-10 contains 60,000 color images of 10 different types of objects. The last news category dataset HPNews is a collection of 200,000 news headlines of HuffPost from 2012 to 2018. The underlying models include two CNNs (one\footnote{The CNN for MNIST has 8 layers with the following structure: $3\times3\times32$ Convolutional $\to$ $3\times3\times64$ Convolutional $\to$ $2\times2$ MaxPool $\to$ Dropout $\to$ Flatten $\to$ $1\times128$ Full connected $\to$ Dropout $\to$ $128\times10$ Fully connected $\to$ Softmax. This configuration is similar to the model in \cite{Google17:McMahan}.} for MNIST-O and MNIST-F, the other\footnote{The CNN for CIFAR-10 has 11 layers with the following structure: $3\times3\times32$ Convolutional $\to$ Dropout $\to$ $2\times2$ MaxPool $\to$ $3\times3\times64$ Convolutional $\to$ Dropout $\to$ $2\times2$ MaxPool $\to$ Flatten $\to$ Dropout $\to$ $1\times1024$ Full connected $\to$ Dropout $\to$ $1024\times10$ Fully connected $\to$ Softmax.} for CIFAR-10) and LSTM. Similar to \cite{Google17:McMahan}, non-IID data distribution of sample data is studied across different edge nodes. FMore, the classic federated learning (referred to as RandFL), and federated learning with fixed node selection (referred to as FixedFL) are all implemented here.

Our simulator consists of one aggregator and 100 participators ($N=100$). In each round of training, $K=20$ winners are selected to join in the cooperative training. The resources considered in the simulator are two-dimensional, i.e., data size $q_1$ and data category $q_2$. The participator computes the Nash equilibrium strategy via the Euler method. For the aggregator, the score function is $S(q_1, q_2, p) = s(q_1, q_2) - p = \alpha q_1 q_2 -p$, where $q_1$ is set to the data size, $q_2$ is the proportion of data category over the range of $(0,1]$, and the coefficient $\alpha$ is set to 25. The determination of winners is based on the first-score sealed auction. Ties are resolved by the flip of a coin. The default parameters are adopted throughout the entire simulations unless explicitly specified. 

We also implement FMore in a realistic HPC Cluster with one aggregator and 31 nodes. The specifications of these 32 nodes include Intel Core i7 CPU, 8GB DDR, 1T HDD+256G SSD, 1Gbps Ethernet, and Linux Ubuntu 18.04 OS. All these nodes are connected by a switch. The resources considered here include computing power, bandwidth, and data size. The scoring function is  $S(q_1, q_2, q_3, p) = \alpha_1 q_1 + \alpha_2 q_2 + \alpha_3 q_3  -p$, where the coefficients $q_1, q_2$, and $q_3$ are separately set as 0.4, 0.3, 0.3 for computing power, bandwidth and data size. The computing power is tuned by the number of CPU cores in the experiments. The data size is allocated over the range of $[2000, 10000]$ for the accuracy test. Nodes randomly choose different quantities of resources in each round of training. All the results are the average of five experiments for both simulations and real-world experiments in this section.

\subsection{Simulations}
The goals of FMore include motivating more nodes with high quality and low cost to participate in cooperative federated learning and improving the performance of global model. In essence, performance improvement is our final goal for federated learning. Here, we mainly discuss the results for the performance improvement of FMore from a variety of aspects.

(1) \textbf{Model Accuracy and Loss:} From Fig. \ref{CNNMNISTO} to Fig. \ref{LSTMN}, we can see that the model accuracy of FMore is larger than those of RandFL and FixFL after 20 rounds of training. When the underlying model is complicated or the training task is challenging, the accuracy gap between FMore and the other two is large, since training CIFAR-10 and HPnews needs more data with high quality and low cost. At the 20th round of learning in LSTM, the accuracy of FMore is 60.4\%, while FixFL is only 40.6\%. Another contribution is that FMore accelerates the training speed by 50\% for MNIST-O (accuracy 95\%),  42\% for MNIST-F (accuracy 84\%),  45\% for CIFAR-10 (accuracy 50\%), and 68\% for HPNews (accuracy 46\%), comparing with RandFL. The performance improvement is attributed to the selection of high-quality nodes, as shown in Fig. \ref{DistributionS}. Finally, the speedup of training especially benefits the aggregator, when the total payment of aggregator is constrained, or nodes would hesitate to participate in the training for a long time. 

(2) \textbf{The Impacts of Parameter $N$}: The increasing number of $N$ will improve data diversity and offer more opportunities for the aggregator to select nodes with high-quality resources and low cost, which in turn improves the accuracy and training speed. In Fig. \ref{ParaN}, the number of training round is reduced by 28\% to get the accuracy of 84\%, comparing $N=50$ with $N=100$. In each round, the accuracy with $N=100$ is larger than that with $N=50$. When $N$ is large enough and 10\% nodes are selected from all the nodes, the improvement of model accuracy is constrained. For $N=200$, data diversity is already satisfied. Moreover, the increase of $N$ reduces the payment $p$ for each node, which also benefits the aggregator.

(3) \textbf{The Impacts of Parameter $K$}: The large parameter $K$ reduces scores of winners since each node has more opportunities to be chosen. From Fig. \ref{ParaK}(b), we can find that the payment is increased as well, which conforms to Theorem 3. On the other hand, the large $K$ will feed the model with more data, which might benefit the model accuracy. In Fig. \ref{ParaK}(a), to get the accuracy of 86\%, 20 rounds of training are required for $K=5$, while 15 rounds are enough for $K=25$. It can be seen that the large $K$ speeds up the training process. When $K$ is too large, the margin profit of training speed is limited. The training results for $K=30$ and $K=35$ are similar in our simulations.

(4) \textbf{The Impacts of Parameter $\psi$}:  We also use the parameter $\psi$ to increase data diversity in some extreme scenarios. In Fig. \ref{PPsi}(b), we can find that the winner scores with $\psi = 0.2$ are more scattered than that with $\psi = 0.9$. When $\psi = 0.8$, almost 66.6\% nodes selected by $\psi$-FMore are among top 30 scores. When $\psi = 0.2$, $\psi$-FMore approaches to RandFL. Moreover,  $\psi$-FMore performs better than FMore in small-data-size scenarios which require more data diversity for federated learning.  Note that the increase of data diversity prevents the overfitting problem but sacrifices the speed of learning, shown in Fig. \ref{PPsi}(a). When $\psi = 0.3$, the accuracy only achieves 85\%, which can be achieved at the 11th round with $\psi = 0.9$.

\begin{figure}[!ht]
\centering
\vspace{0.05cm}
\setcounter{figure}{7}
\subfigure[CNN with CIFAR-10]{
\begin{minipage}[t]{0.46\linewidth}
%\centering
\includegraphics[scale = 0.27]{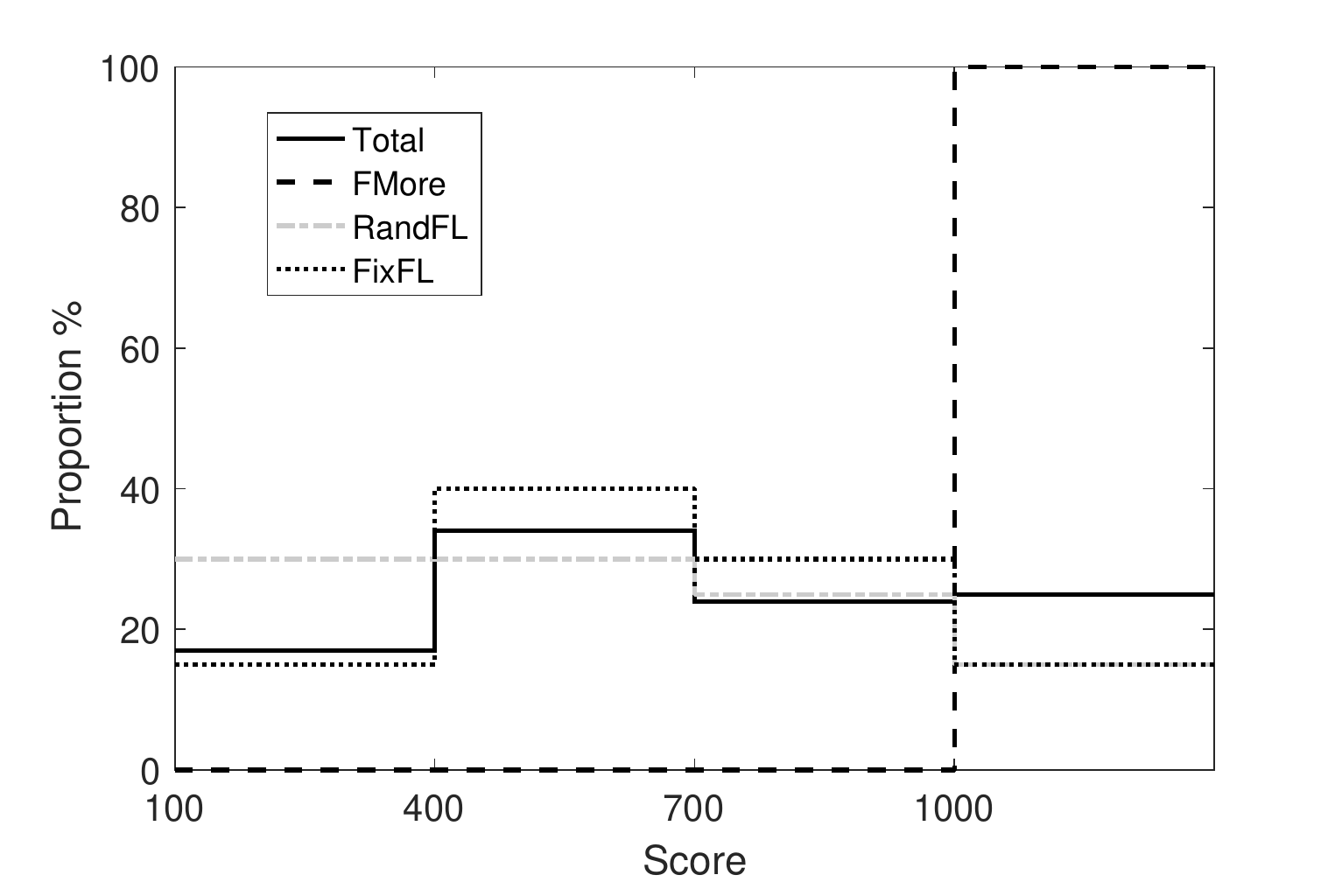}
%\caption{fig1}
\end{minipage}%
}%
\subfigure[LSTM with HPNews]{
\begin{minipage}[t]{0.52\linewidth}
%\centering
\includegraphics[scale = 0.27]{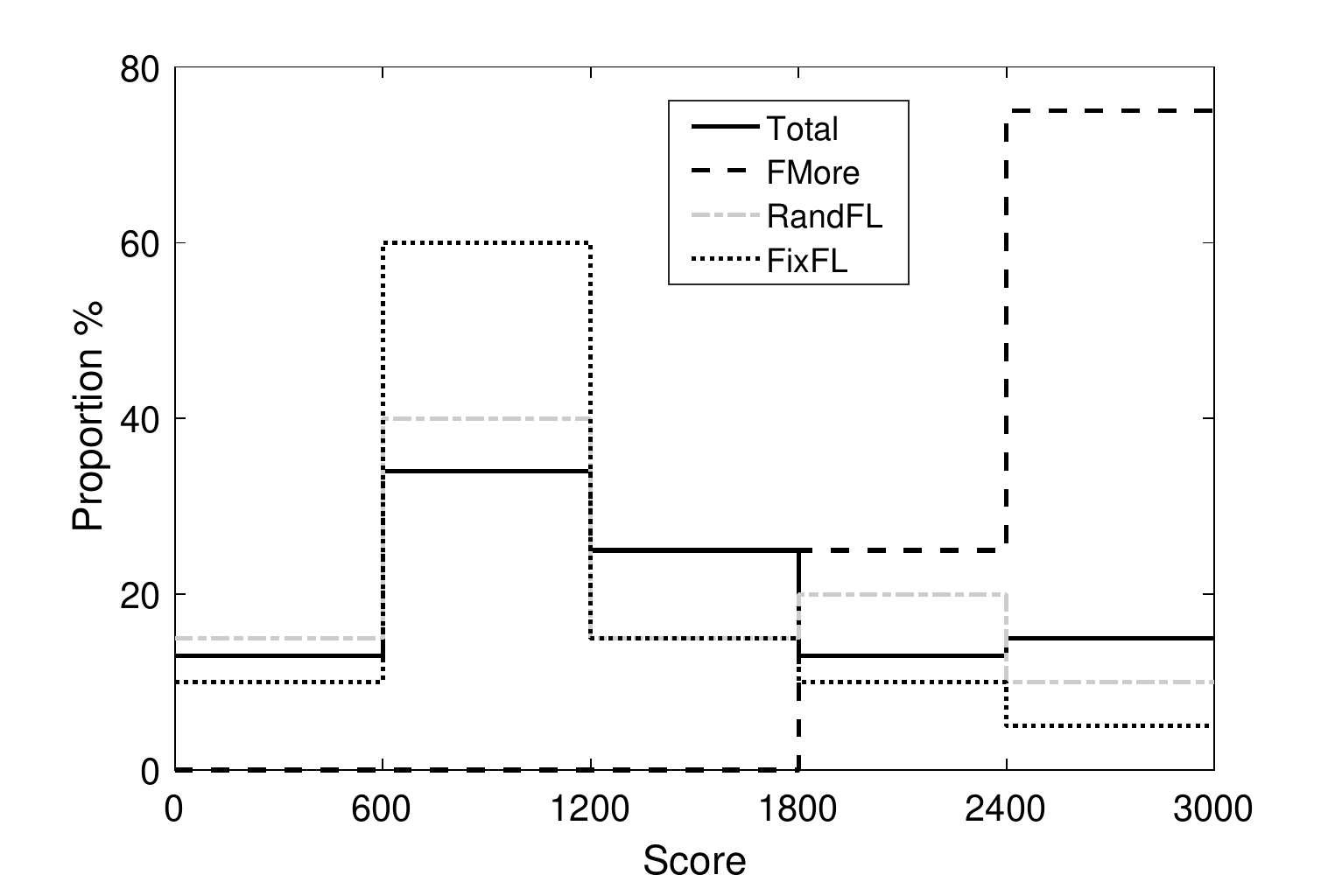}
%\caption{fig2}
\end{minipage}%
}%
\centering
%\vspace{-0.1cm}
\caption{The distribution of score}
\label{DistributionS}
\end{figure}

\begin{figure}[!ht]
\centering
\setcounter{figure}{9}
\subfigure[Rounds vs the accuracy]{
\begin{minipage}[t]{0.46\linewidth}
%\centering
\includegraphics[scale = 0.27]{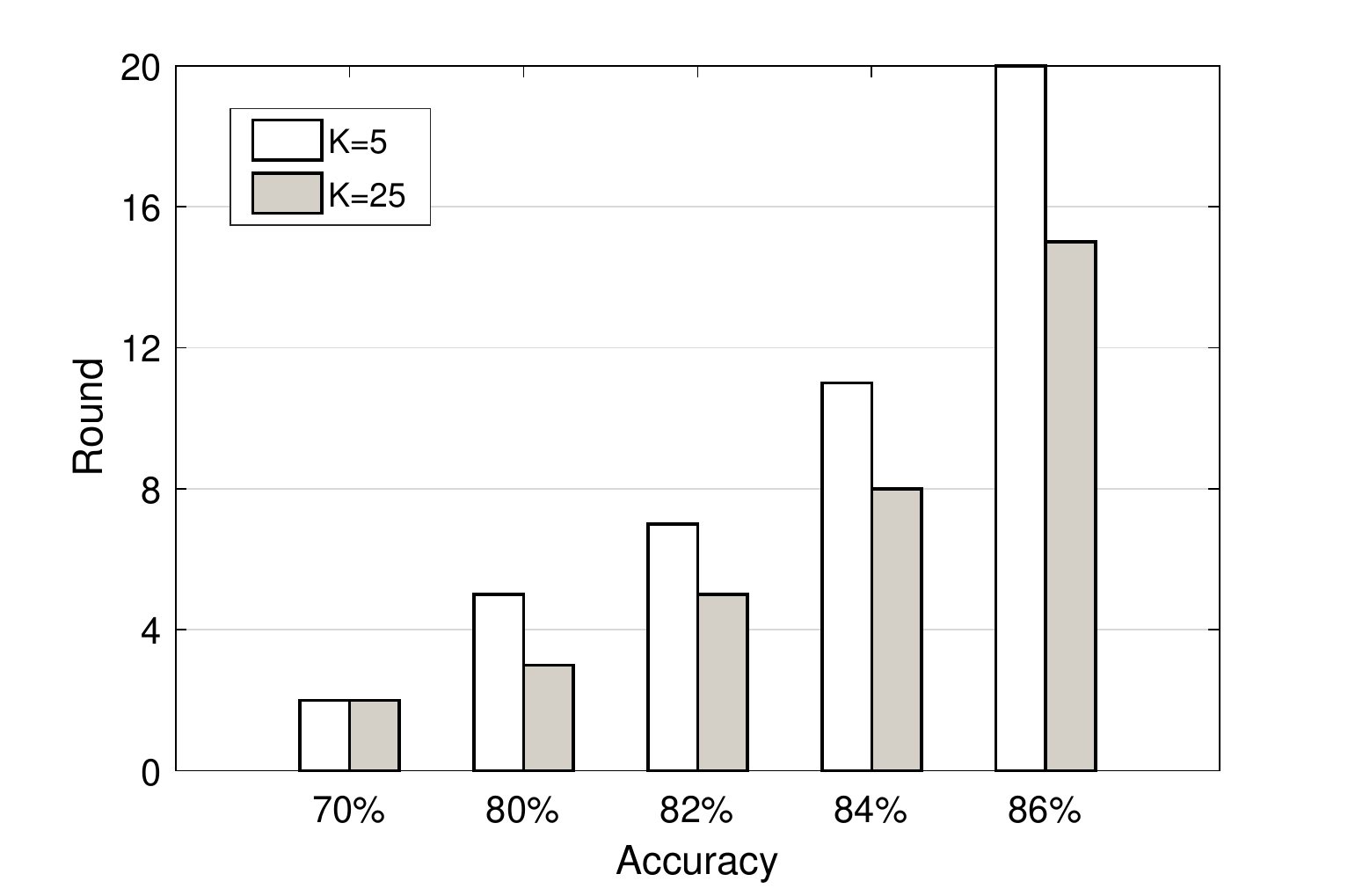}
%\caption{fig1}
\end{minipage}%
}%
\subfigure[Payment $p$ and score with $K$ ]{
\begin{minipage}[t]{0.52\linewidth}
%\centering
\includegraphics[scale = 0.27]{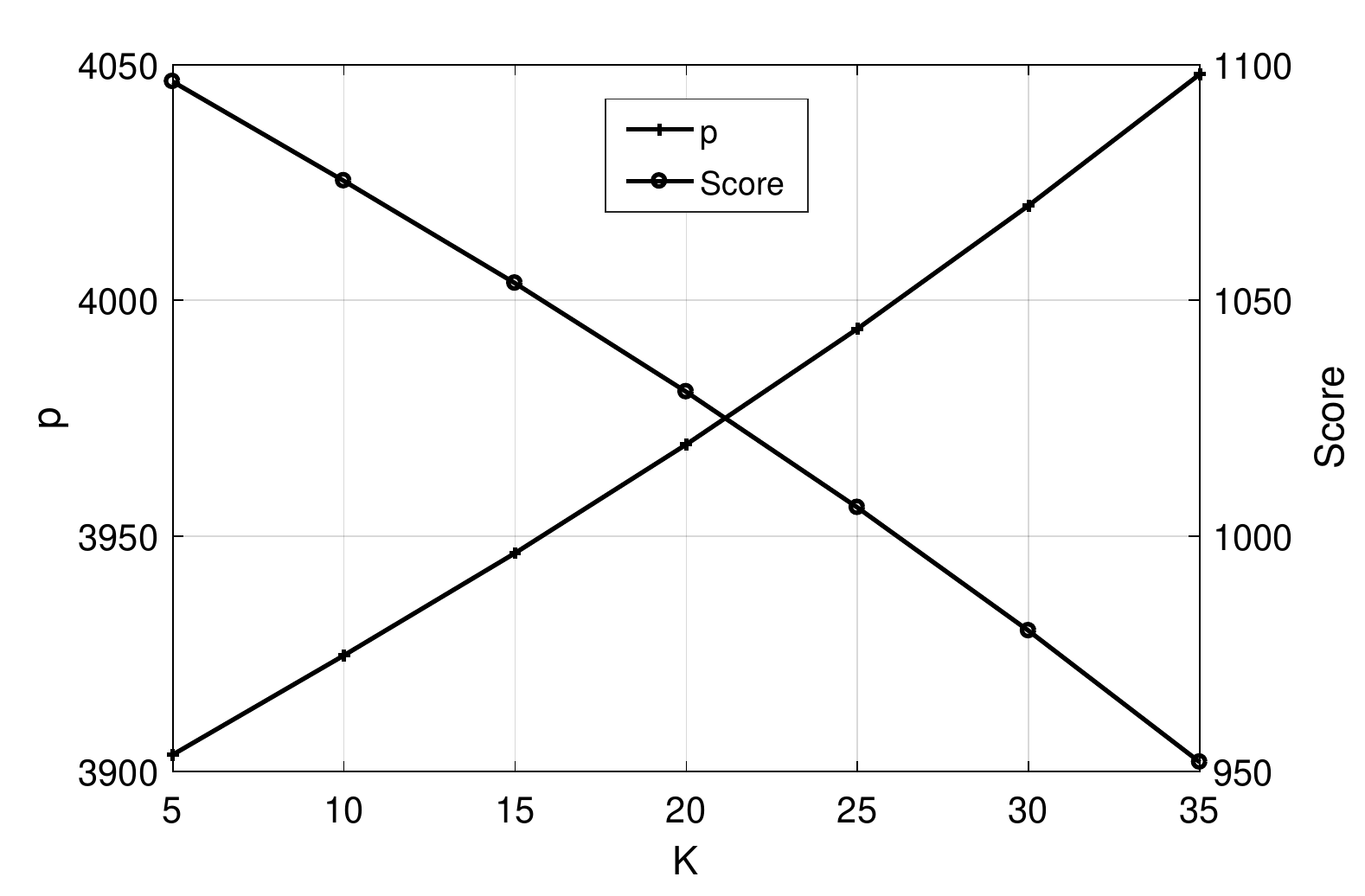}
%\caption{fig2}
\end{minipage}%
}%
\centering
%\vspace{-0.1cm}
\caption{The training speed and payment with parameter $K$}
\label{ParaK}
%\vspace{-0.3cm}
\end{figure}

\subsection{Real-world Experiments}
In the realistic deployment, the performance improvement of FMore includes two aspects, i.e., the accuracy improvement and the reduction of training time. From Fig. \ref{RelFig}, we can find that the model accuracy is 59.9\% for CIFAR-10 after the 20th round of training in FMore. FMore increases the accuracy by 44.9\%, comparing with RandFL. Similar accuracy improvement is shown in the LSTM model as well. In addition, there exist some accuracy jitters in RandFL. For the reduction of training time, FMore outperforms RandFL as well, as shown in Fig. \ref{RelFig2}. The total training time of 20 rounds is 1119.3s for CIFAR-10 in FMore, which reduces the training time by 38.4\%. To achieve the accuracy of 50\% for CIFAR-10, RandFL needs almost 17 rounds (1552.7s), while FMore only requires 8 rounds (427.7s). The advantages of FMore become increasingly prominent when we collaboratively training challenging AI tasks.  

\vspace{0.1cm}
\section{Related Work}
\subsection{Mobile Edge Computing}
MEC has drawn increasing attention in recent years \cite{IotJ:Abbas}. Most of the studies focus on service placement \cite{Infocom19:Poularakis}, \cite{Infocom19:Ou}, \cite{Infocom19:Gao}, task scheduling \cite{Infocom18:Xu}, deployment issues \cite{MECOMM18:Syamkumar}, \cite{TVT:Min}, etc. Among these studies, Wang firstly considered the performance issue of federated learning in MEC systems, and proposed an efficient control algorithm that trades off local updates and global aggregation to minimize the loss function with the constraint of resources \cite{Infocom18:Wang}.  Another interesting work is given in \cite{Networks19:Wang}, where both deep reinforcement learning and federated learning are employed to optimize edge computing, caching and communication in MEC. 

\begin{figure}[!ht]
\centering
\setcounter{figure}{8}
\subfigure[Rounds vs the accuracy]{
\begin{minipage}[t]{0.46\linewidth}
%\centering
\includegraphics[scale = 0.27]{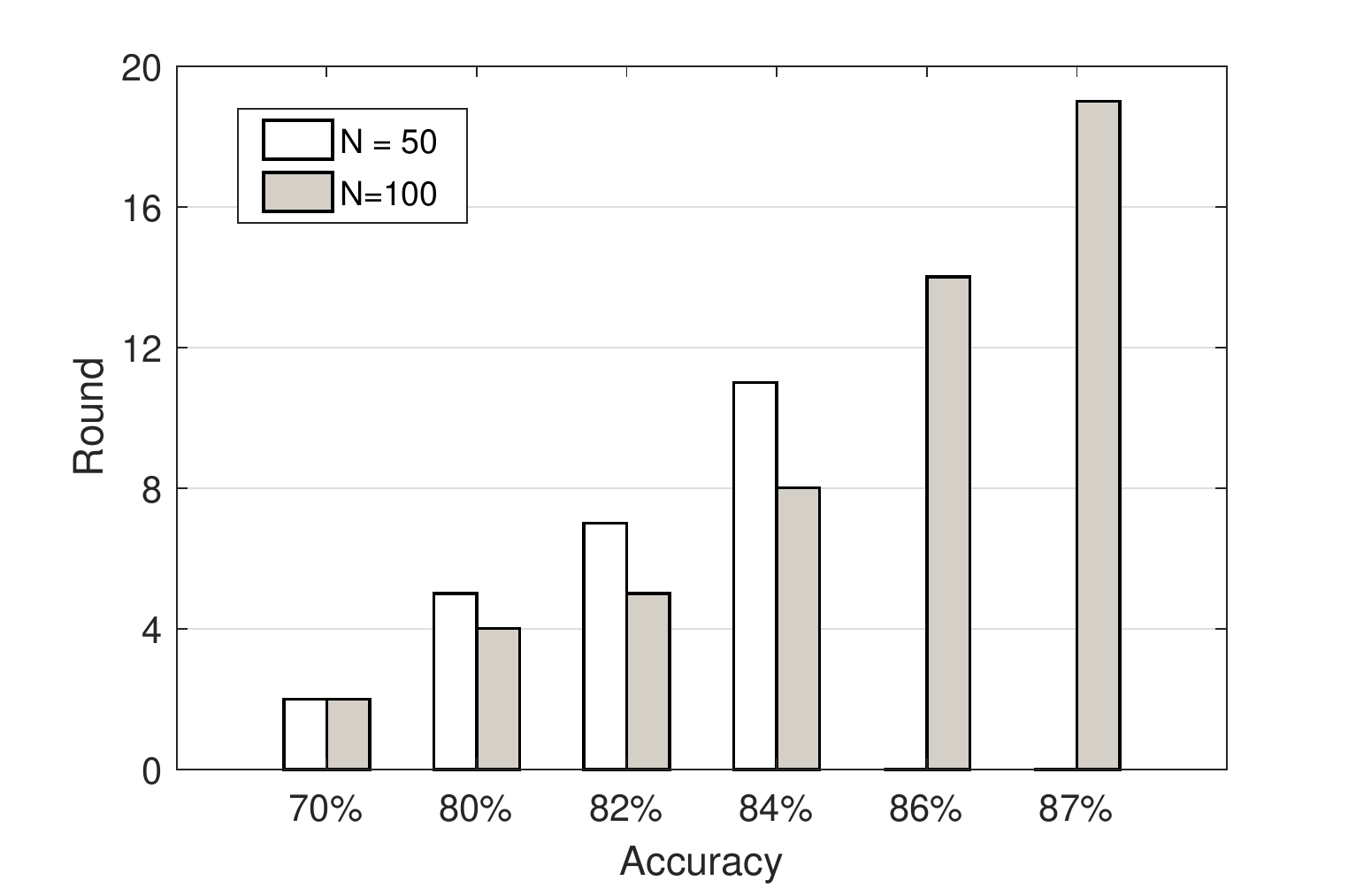}
%\caption{fig1}
\end{minipage}%
}%
\subfigure[Payment $p$ and score with $N$ ]{
\begin{minipage}[t]{0.52\linewidth}
%\centering
\includegraphics[scale = 0.27]{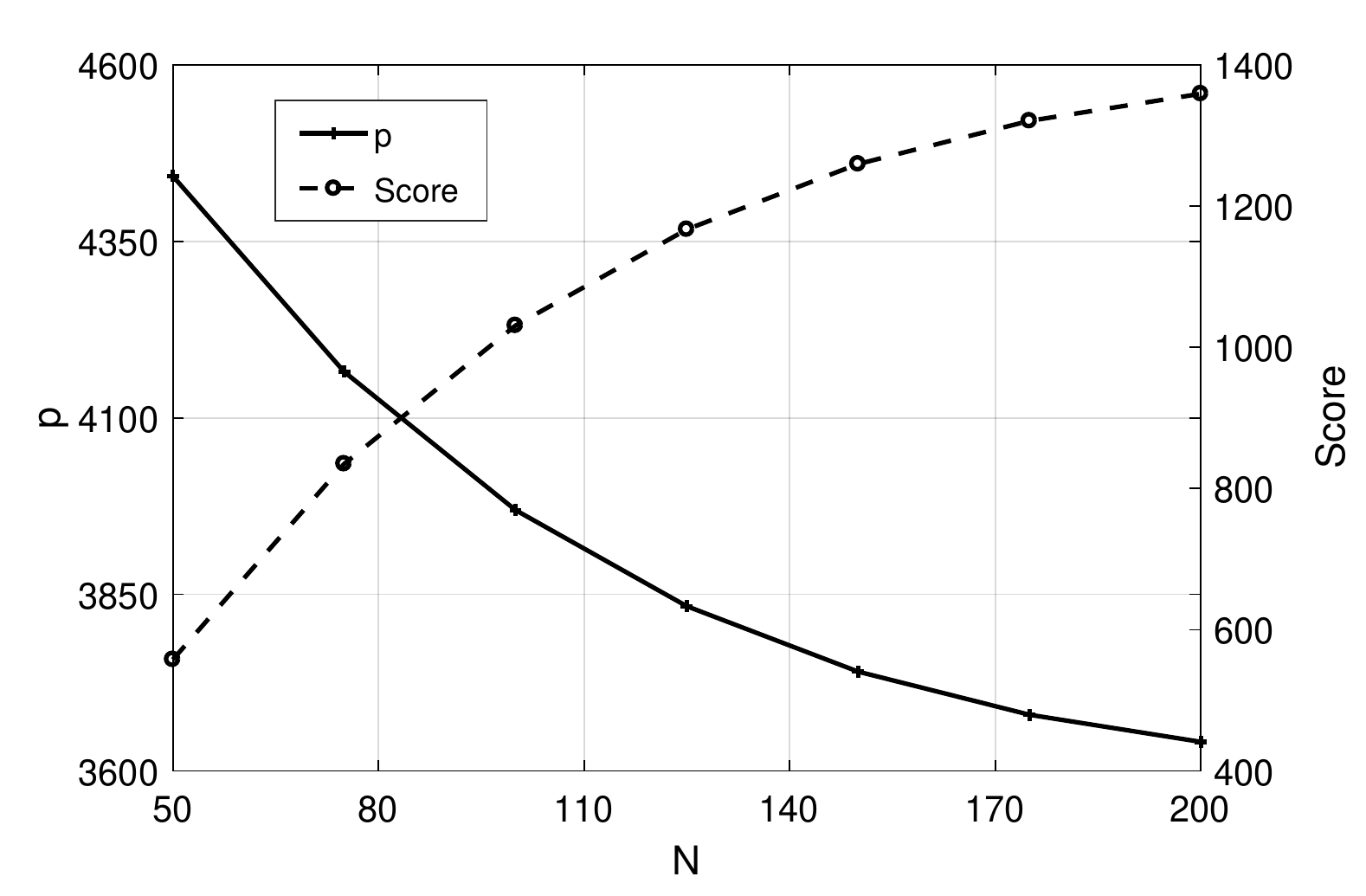}
\end{minipage}%
}%
\centering
\caption{The training speed and payment with parameter $N$}
\label{ParaN}
\end{figure}

\begin{figure}[!ht]
\centering
\setcounter{figure}{10}
\subfigure[The training speed with $p$]{
\begin{minipage}[t]{0.46\linewidth}
%\centering
\includegraphics[scale = 0.27]{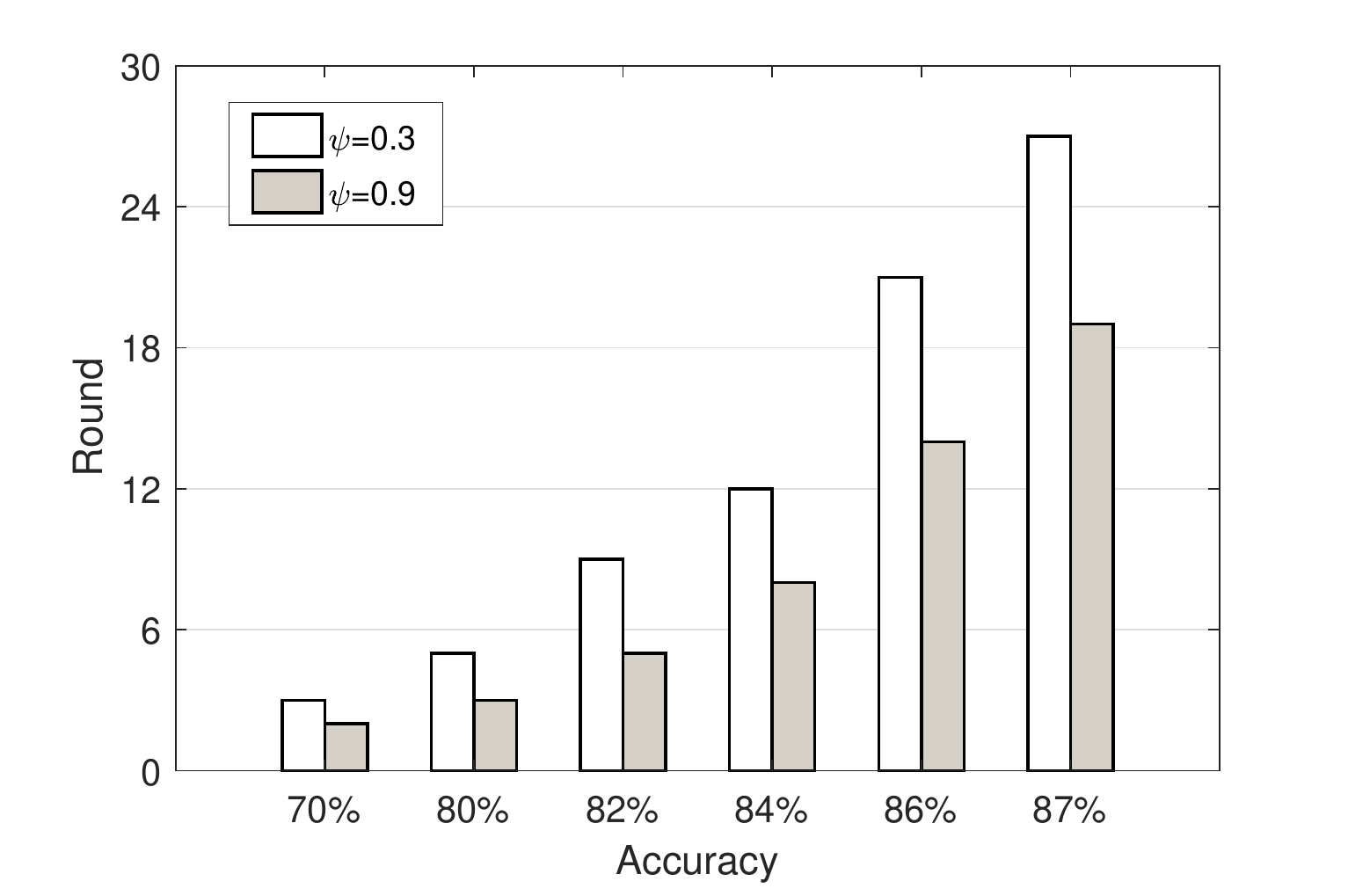}
%\caption{fig1}
\end{minipage}%
}%
\subfigure[The proportion of selected node]{
\begin{minipage}[t]{0.52\linewidth}
%\centering
\includegraphics[scale = 0.27]{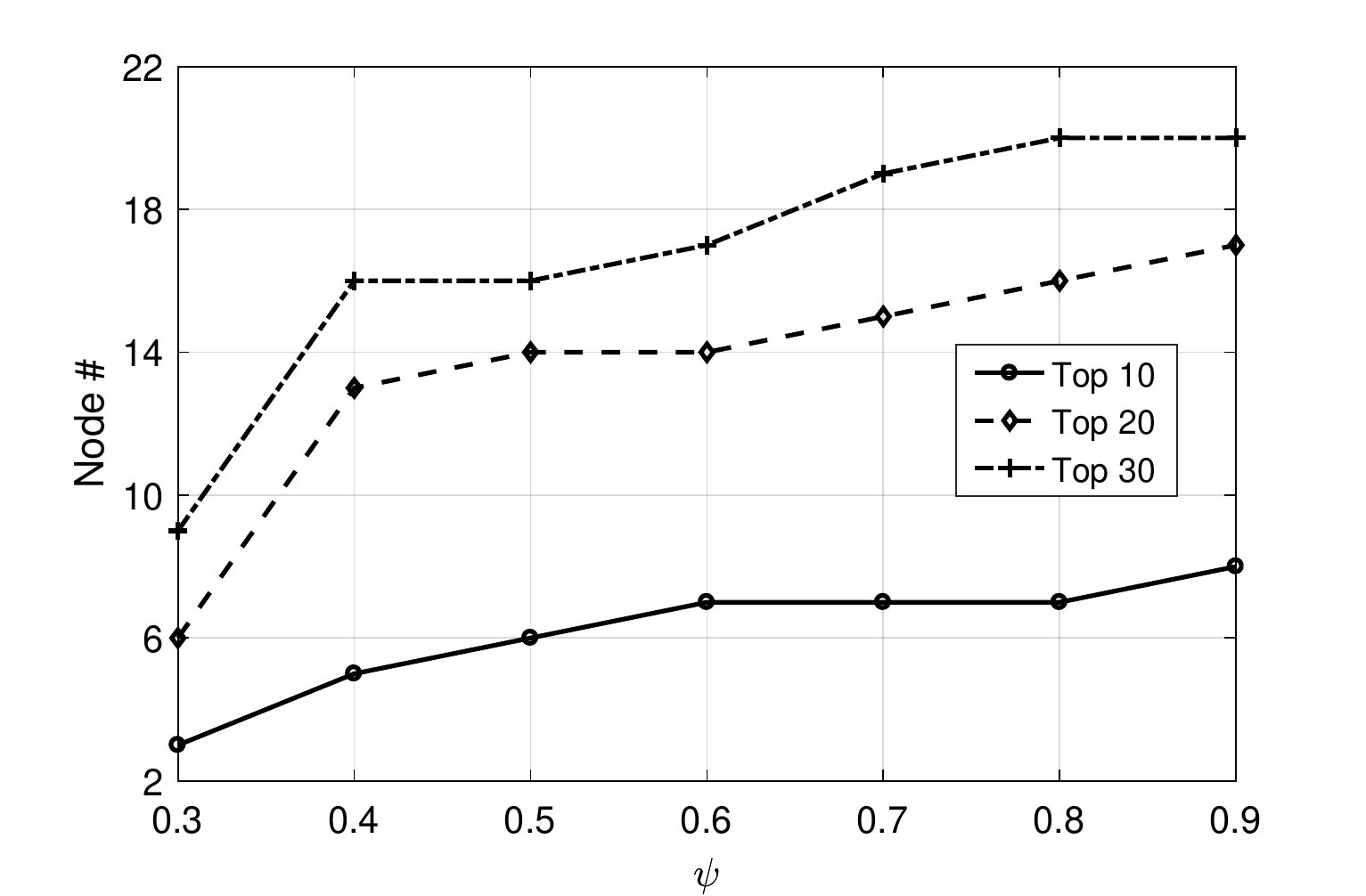}
%\caption{fig2}
\end{minipage}%
}%
%\vspace{-0.2cm}
\caption{The performance impacts of parameter $\psi$}
\label{PPsi}
\end{figure}

\subsection{Federated Learning}
Federated learning, firstly proposed by McMahan in \cite{Google17:McMahan}, has become a fascinating topic in the machine learning community \cite{Others19:Li}.  It is designed for privacy-concerning scenarios where local nodes would not like to upload and share their private data. Federated Learning is quite different from another impressive technique called distributed machine learning \cite{INFOCOM2019:Chu}. Distributed machine learning is adopted to deal with the massive data set and partition subsets of data to many nodes. Nowadays, plenty of studies focus on federated learning. Kairou et. al. summarized 438 papers and presented recent advances and open problems in the field of federated learning \cite{Others19:Kairouz}.

Many papers concentrate on the performance improvement of federated learning \cite{Infocom18:Bao}, \cite{Infocom19:Bao}. In \cite{Others19:Sattler}, Sattler proposed the sparse ternary compression scheme for non-IID data. Zhao also focused on the non-IID data and presented a method of sharing a small subset of data between all the edge nodes to improve the accuracy of federated learning \cite{Others18:Zhao}.  For Stochastic Gradient Descent (SGD), Wang presented and analyzed the cooperative SGD method, and provided convergence guarantees for the existing algorithms \cite{Others19:Wang}. In \cite{AAAI19:Yu}, Yu provided theoretical studies on the comparison of model averaging and mini-batch SGD. Nishio studied the node selection problem with resource constraints and provided a heuristic algorithm to find qualified nodes \cite{ICC19:Nishio}. This paper is a little similar to our work, but Nishio neglects the incentive mechanism, which is quite significant for MEC systems.

Both security and privacy are important concerns for federated learning \cite{Others18:Kim}. Bonawitz proposed a secure global aggregation algorithm that allows the server to compute without learning each user's contribution \cite{CCS17:Bonawitz}. Impressively, Wang explored the user-level privacy leakage against federated learning by attacks from malicious servers, and proposed a framework with GAN to discriminate category and client identity of input samples \cite{Infocom19:Wang}. For the local privacy, Bhowmick designed an optimal locally differentially private scheme for statistical learning problems \cite{Others19:Bhowmick}. In \cite{Applied18:Preuveneers}, Preuveneers considered attacks from local models with malicious training samples and provided a chained anomaly detection method for federated learning. In \cite{Others18:Bagdasaryan}, Bagdasaryan identified that participators can inject hidden backdoors into the global model and proposed a new model poisoning methodology with model replacement. 

All these studies assume that edge nodes voluntarily participate in federated learning, without requiring any returns, which does not hold in the realistic scenario of MEC. They neglect the incentive issue of federated learning, except for the work in \cite{Others19:Kang}. Kang utilized the contract theory to motivate nodes to participate in federated training in mobile networks \cite{Others19:Kang}. Kang and we almost simultaneously discover the significance of the incentive issue. However, there exist some main differences: (1) Kang considers the incentive problem in the monopoly market, where mobile terminals can only decide whether to accept the contracts or not. The efficiency of Kang's scheme is decided by the total number of contracts. In our scheme, edge nodes have more opportunities to submit any combination of resources and the expected payment, and the buyer ``aggregator'' can choose any node with qualified resources. (2) In Kang's scheme, the computational cost of calculating the optimal contracts is NP-hard, while edge nodes only need linear time to get the optimal strategy in FMore. (3) Node selection is not provided in \cite{Others19:Kang}, while we not only motivate more high-quality edge nodes to participate in the training but also select those suitable nodes with low costs.

\begin{figure}[!t]
\centering
\setcounter{figure}{11}
%\subfigure[Cumulative time with rounds]{
\subfigure{
\begin{minipage}[t]{0.46\linewidth}
\includegraphics[scale = 0.27]{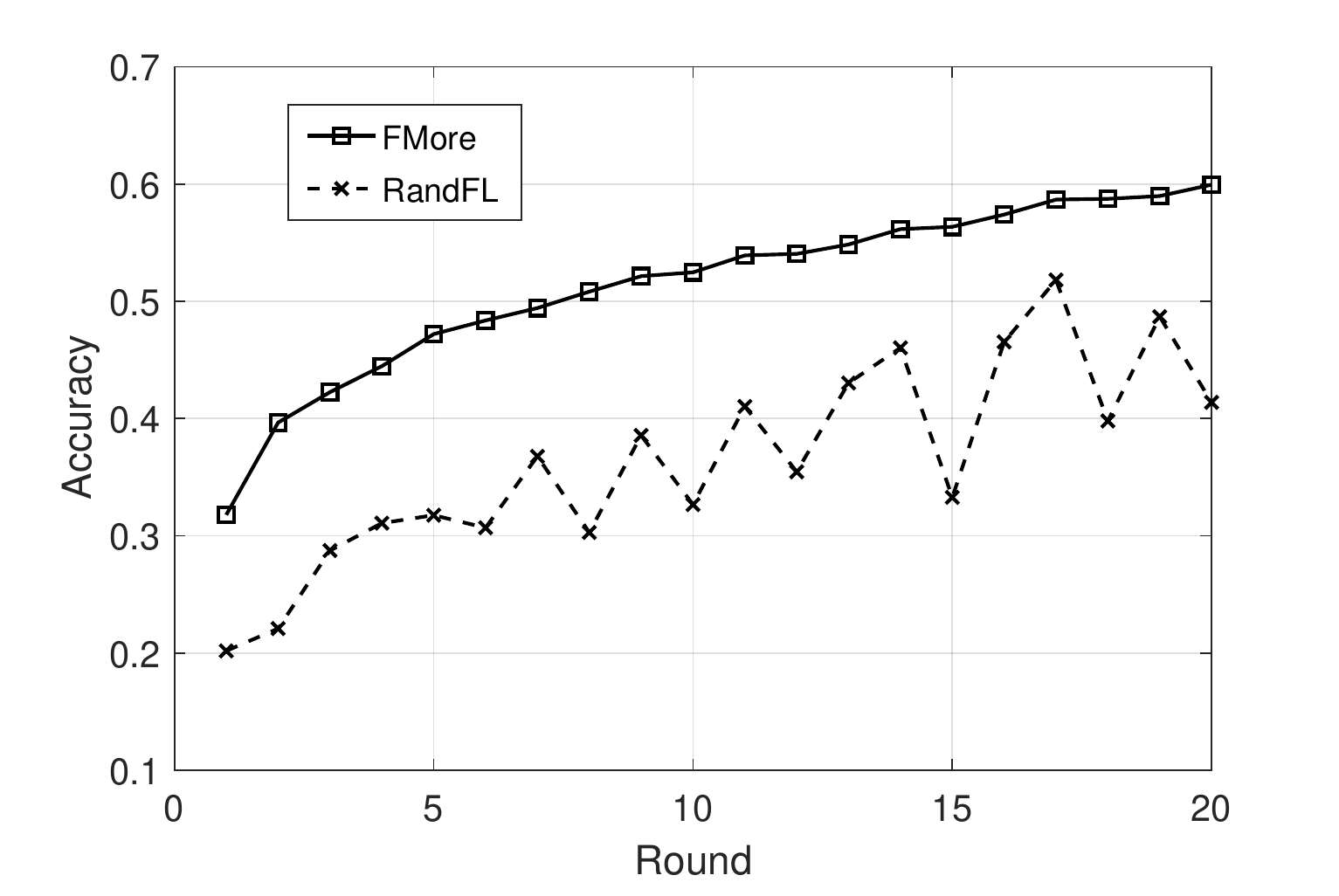}
\end{minipage}%
}%
\subfigure{
\begin{minipage}[t]{0.52\linewidth}
\includegraphics[scale = 0.27]{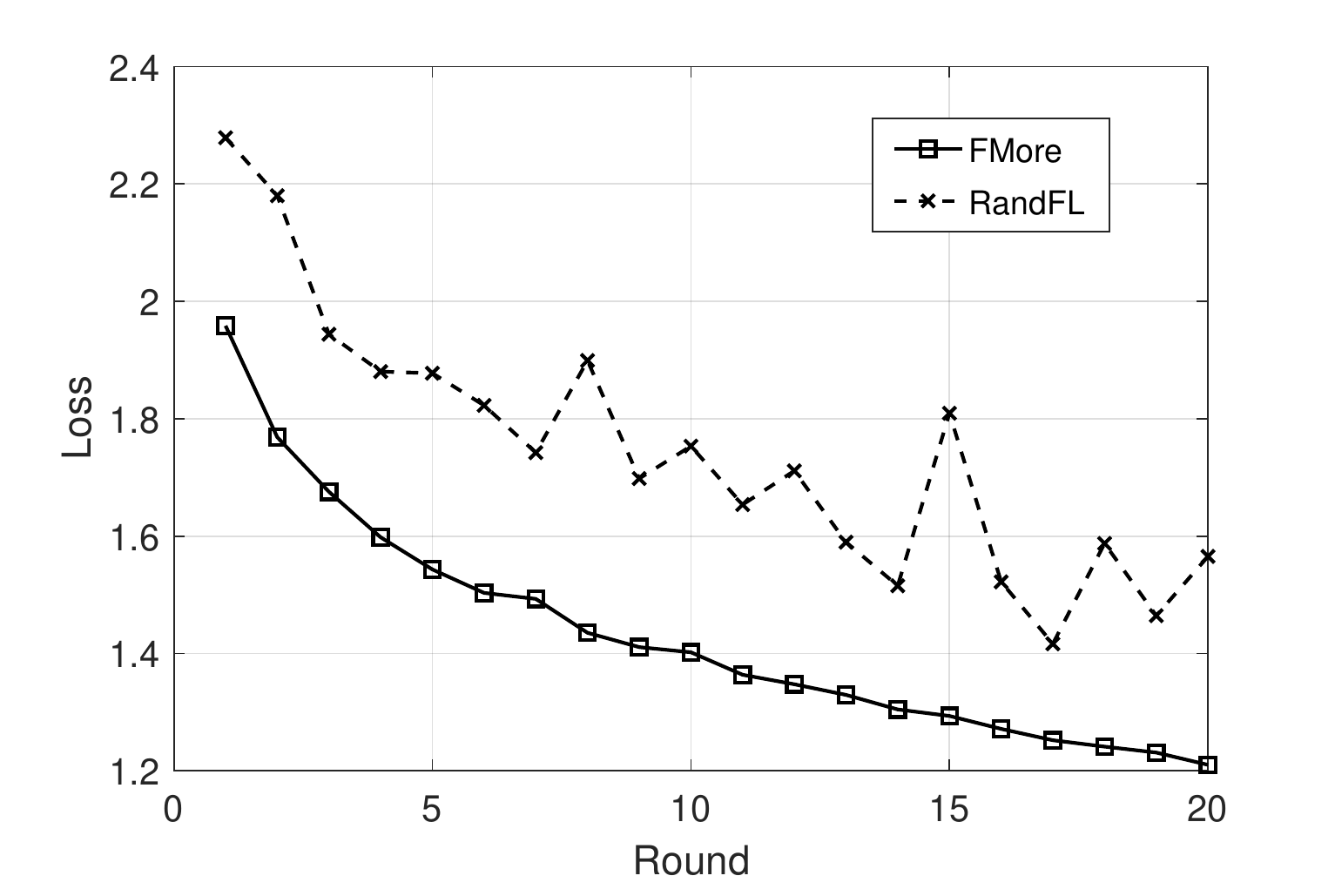}
\end{minipage}%
}%
\centering
%\vspace{-0.1cm}
\caption{The performance of CIFAR-10 in realistic deployment}
\label{RelFig}
\end{figure}

\subsection{Procurement Auction}
The incentive schemes with procurement auction are designed to address varieties of problems, such as the allocation of radio-frequency spectrum \cite{TMC18:Lin}, crowdsensing \cite{Ton16:Yang}, display advertising \cite{Sigkdd14:Zhang}, and client-assisted cloud storage systems \cite{TMC18:Jin}, \cite{INFOCOM2015:Zhao}. Unfortunately, none of them can be directly applied to federated learning in MEC, since they just considered the specific property in their problems. In MEC, resources provided by edge nodes are multi-dimensional and dynamic, and $K$ winners are selected in each game. In addition, the proposed scheme should be lightweight and able to improve the performance of federated learning with well-chosen nodes. Consequently, we need to design a novel incentive scheme for federated learning in MEC. 

\begin{figure}[!t]
\centering
\setcounter{figure}{12}
%\subfigure[Cumulative time with rounds]{
\subfigure{
\begin{minipage}[t]{0.46\linewidth}
\includegraphics[scale = 0.27]{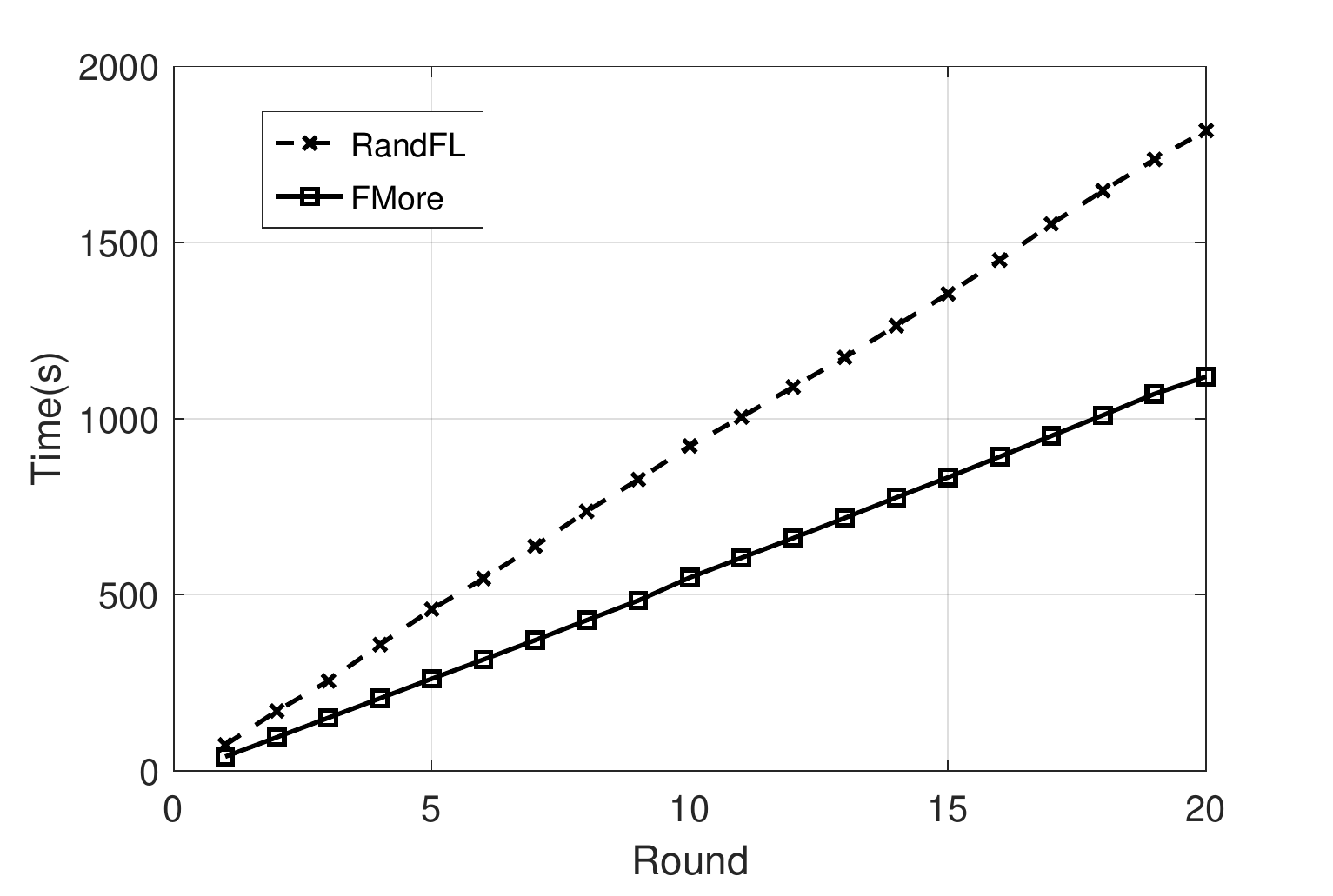}
\end{minipage}%
}%
\subfigure{
\begin{minipage}[t]{0.52\linewidth}
\includegraphics[scale = 0.27]{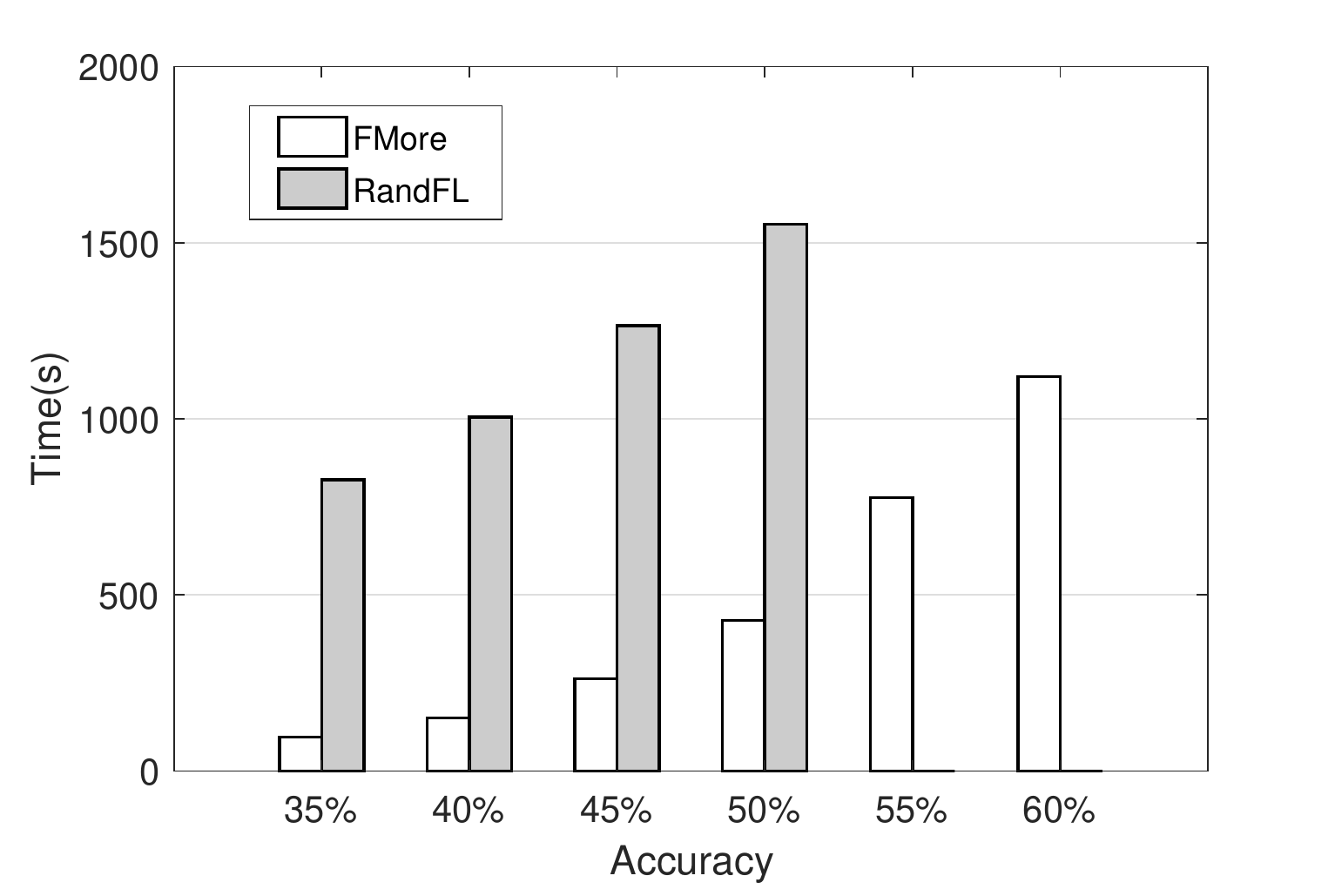}
\end{minipage}%
}%
\centering
%\vspace{-0.1cm}
\caption{The training speed for CIFAR-10 in realistic deployment}
\label{RelFig2}
\end{figure}

\section{Conclusion}  
In this paper, we have considered the incentive mechanism for federated learning in MEC and proposed a lightweight and efficient scheme FMore. FMore adopts the multi-dimensional procurement auction with $K$ winners. In FMore, edge nodes can obtain the Nash equilibrium strategy from our theoretical results in linear computation time. We also provide guidance to the aggregator to get the expected resources. We develop a simulator with two models and four datasets to demonstrate the advantages of FMore. Extensive simulations show that FMore can reduce the training rounds by almost 51.3\% and improve the accuracy by 28\% for the LSTM model. We also implement a real-world system with 32 nodes in a Linux HPC cluster, and find that the training time is reduced by 38.4\% while model accuracy is increased by 44.9\%. In this paper, the budget constraint of the aggregator is not considered, which is left for future work. In addition, whether the probability $\psi$ should be identical or distinct for each node remains to be studied. 

\vspace{0.2cm}

\appendix
\subsection{The proof of Proposition 2}
\begin{proof}
Since all the nodes have the same private cost parameter $\theta$, we can find that their scores are identical, according to Che's Theorem 1 and Theorem 1. Since we must select $K$ nodes from $N$ nodes, and each node is being selected with the same probability, then it will be put in the set $\mathbb{W}$ with probability $\frac{K}{N}$, which is not related with probability $\psi$. 
\end{proof}
\subsection{The proof of Proposition 3}
\begin{proof}
Suppose that an edge node has a Nash equilibrium strategy $(q_1, \cdots, q_m, p)$, which means the expected utility is maximized by this bidding strategy. In the following, we will obtain the contradiction that there exists an alternative strategy $(q_{s1}, \cdots, q_{sm}, \hat{p})$, where $(q_{s1}, \cdots, q_{sm}) = arg \, max \, s(q_{s1}, \cdots, q_{sm}) - c(q_{s1}, \cdots, q_{sm},\theta)$ and $\hat{p} = p + s(q_{s1}, \cdots, q_{sm}) - s(q_1, \cdots, q_m)$, and it dominates the Nash equilibrium strategy $(q_1, \cdots, q_m, p)$. We first find that these two strategies have the same scores, i.e., $ S(q_{s1}, \cdots, q_{sm}, \hat{p}) = S(q_{1}, \cdots, q_{m}, p)$. This can be easily proved by substituting $\hat{p}$ with $ p + s(q_{s1}, \cdots, q_{sm}) - s(q_1, \cdots, q_m)$ in the score function. Then, we can have $\pi(q_{s1}, \cdots, q_{sm}, \hat{p}) \ge \pi(q_{1}, \cdots, q_{m}, p)$, and get the expected contradiction. This demonstrates that the qualities could be chosen with the optimization problem $arg \, max \, s(q_{s1}, \cdots, q_{sm}) - c(q_{s1}, \cdots, q_{sm},\theta)$. 
\end{proof}

\subsection{The proof of Proposition 4}
\begin{proof}
According to the expected utility theory, the aggregator needs to solve the following optimization problem with the cost constraint: 
\begin{eqnarray*}
&&max \ s(\cdot) = \prod_{i=1}^m q_i^{\alpha_i}\\
&&s.t. \sum_{i=1}^m{\tilde{\beta}_i} q_i \theta= c_0
\end{eqnarray*}
where $\tilde{\beta}_i$ is the estimation of coefficients for $q_i$ and $c_0$ is the budget. We use the Lagrange multiplier method to compute the above problem and have 
\begin{equation*}
L = s(q_1, \cdots, q_m) - \lambda (\theta( \sum_{i=1}^m{\tilde{\beta}_i} q_i) - c_0)
\end{equation*}
Then, we can get 
\begin{equation*}
\frac{\partial L}{\partial q_i } =  \frac{\alpha_i \prod_{i=1}^m q_i^{\alpha_i}}{q_i} -\theta \lambda {\tilde{\beta}_i} =0
\end{equation*}
Thus, we can easily obtain the conclusion 
\begin{equation*}
\frac{q_i^*}{q_j^*} = \frac{\alpha_i}{\alpha_j} \cdot \frac{\tilde{\beta}_j}{\tilde{\beta}_i},
\end{equation*}
where $q_i^*$ and $q_j^*$ are the optimal choices for the aggregator. 
\end{proof}

\end{document}